\newtheorem{theorem}{Theorem}
\newtheorem{lemma}{Lemma}
\newtheorem{definition}{Definition}
\newtheorem*{proof}{Proof}
\title{FairGSE: Fairness-Aware Graph Neural Network without High False Positive Rates}
\author{
    Zhenqiang Ye\textsuperscript{\rm 12}\equalcontrib, 
     Jinjie Lu\textsuperscript{\rm 12}\equalcontrib, 
     Tianlong Gu\textsuperscript{\rm 12}\thanks{Corresponding author.}, 
     Fengrui Hao\textsuperscript{\rm 12}, 
    Xuemin Wang\textsuperscript{\rm 3}
}
\begin{document}

\maketitle

\begin{abstract}
Graph neural networks (GNNs) have emerged as the mainstream paradigm for graph representation learning due to their effective message aggregation. However, this advantage also amplifies biases inherent in graph topology, raising fairness concerns. Existing fairness-aware GNNs provide satisfactory performance on fairness metrics such as Statistical Parity and Equal Opportunity while maintaining acceptable accuracy trade-offs. Unfortunately, we observe that this pursuit of fairness metrics neglects the GNN's ability to predict negative labels, which renders their predictions with extremely high False Positive Rates (FPR), resulting in negative effects in high-risk scenarios. To this end, we advocate that classification performance should be carefully calibrated while improving fairness, rather than simply constraining accuracy loss. Furthermore, we propose Fair GNN via Structural Entropy (\textbf{FairGSE}), a novel framework that maximizes two-dimensional structural entropy (2D-SE) to improve fairness without neglecting false positives. Experiments on several real-world datasets show FairGSE reduces FPR by 39\% vs. state-of-the-art fairness-aware GNNs, with comparable fairness improvement.
\end{abstract}


\section{Introduction}

Graph neural networks (GNNs) have developed as one of the most prominent techniques for modeling graph-structured data. The outstanding performance of GNNs can be attributed to the message aggregation mechanism \cite{ying2019gnnexplainer}, which iteratively updates a node representation by aggregating the feature information from its neighbors. However, recent studies \cite{NIFTY, FairGNN} have shown that message aggregation can introduce biases inherent in graph topology. Such biases inflict discrimination on specific sensitive groups, which are represented by subsets of nodes sharing the same sensitive attribute (e.g., gender \cite{genderbias}, race \cite{NIFTY}), raising group fairness\begin{figure}[htbp]
\centerline{\includegraphics[scale=0.30]{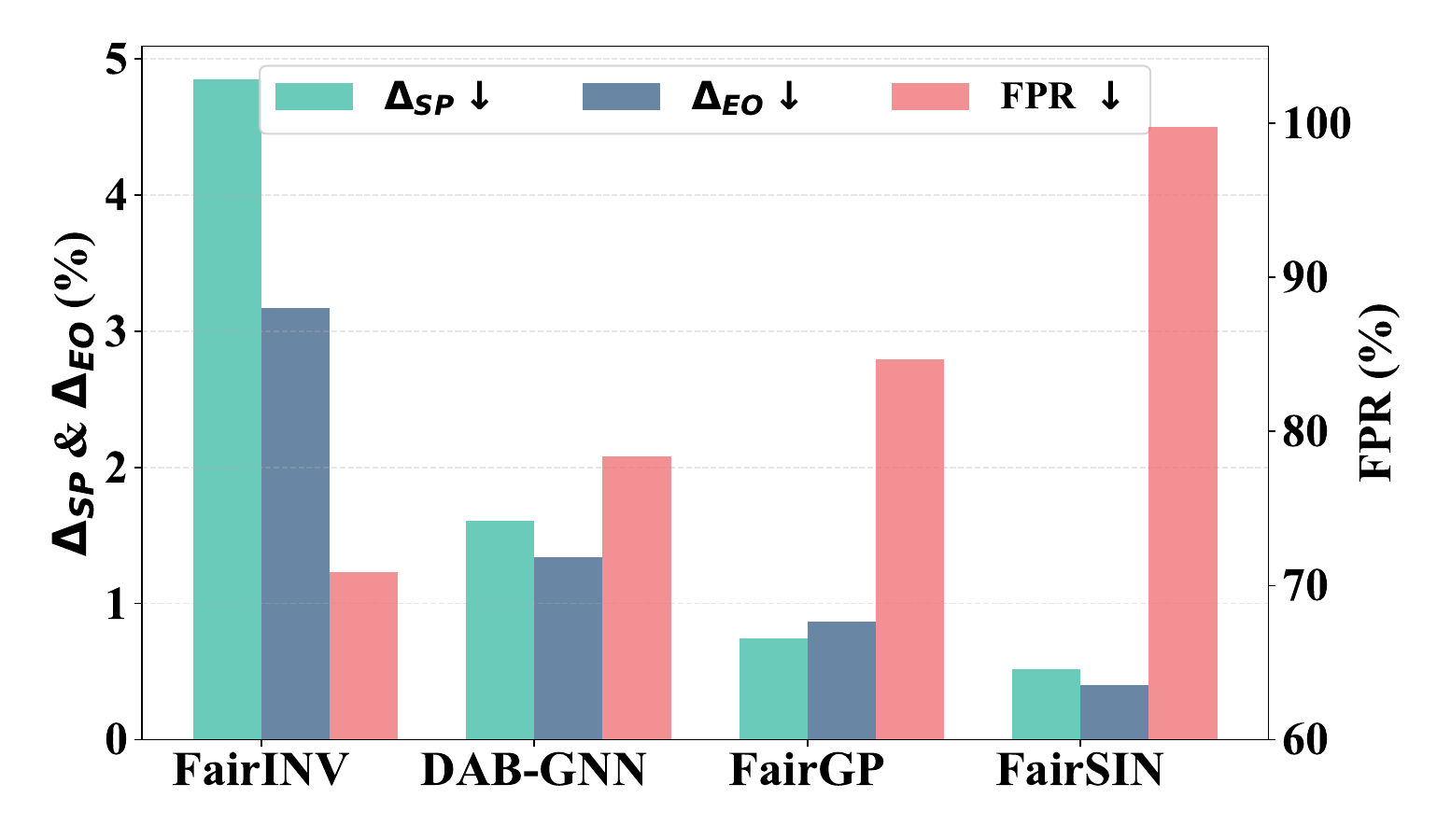}}
\caption{FPR, $\Delta_{SP}$ and $\Delta_{EO}$ of Fairness-aware GNNs on Credit dataset.}
\label{fig:creditFPR}
\end{figure} concerns of GNNs in high-stakes scenarios. Group fairness in GNNs is often evaluated using metrics like Statistical Parity ($\Delta_{\textit{SP}}$) \cite{dwork2012fairness} and Equal Opportunity ($\Delta_{\textit{EO}}$) \cite{hardt2016equality}. Recent state-of-the-art (SOTA) fairness-aware GNNs achieve a reasonable trade-off between accuracy and group fairness.

However, enhancing group fairness such as $\Delta_{\textit{SP}}$ and $\Delta_{\textit{EO}}$ alone may encourage models to adopt a misguided shortcut, called \textit{FPR shortcut}, in which fairness is improved at the cost of excessively lifting the false positive rates (FPR). As shown in Fig.~\ref{fig:creditFPR}, despite mitigating both $\Delta_{\textit{SP}}$ and $\Delta_{\textit{EO}}$, existing fairness-aware methods exhibit extremely high false positive rates in the \textit{Credit defaulter graph}\cite{yeh2009comparisons}. In particular, the FairSIN FPR \cite{yang2024fairsin} is close to 100\% \%, indicating that it classifies nearly all customers as credit card defaulters, which is clearly unreasonable. This phenomenon cannot be captured by $\Delta_{\textit{SP}}$ and $\Delta_{\textit{EO}}$, which primarily focus on the probability of positive predictions within each sensitive group. Fairness-aware GNNs are often deployed in high-stakes scenarios, such as assessing credit risk on the \textit{German credit graph} \cite{dua2017uci} or deciding bail eligibility on the \textit{Recidivism graph} \cite{jordan2015effect}. In these contexts, models with high FPR tend to misclassify high-risk offenders as low-risk or incorrectly assess poor-credit users as reliable, raising serious trustworthy concerns.

In this paper, we pioneer how to improve the fairness of GNNs while avoiding the FPR shortcut. To this end, we formulate the fairness-FPR trade-off as an optimization problem constrained by an upper bound on a two-dimensional structural entropy (2D-SE) objective. Furthermore, we propose FairGSE (\underline{Fair} \underline{G}NN via two-dimensional \underline{S}tructural \underline{E}ntropy), a framework that adaptively reweights graph edges to maximize 2D-SE. FairGSE can provide balanced message aggregation across sensitive groups, discouraging reliance on the FPR shortcut to achieve fairness.

Our contributions are as follows:
\begin{itemize}
    \item  We pioneer an important yet underexplored issue in fairness-aware GNNs: the tendency to improve fairness by excessively lifting high FPR.
    \item We formulate the trade-off between fairness and FPR as an upper-bound optimization problem based on two-dimensional structural entropy. 
    \item We propose FairGSE, a novel GNN framework that adaptively reweights graph edges to maximize the two-dimensional structural entropy objective.
    \item We evaluate FairGSE on multiple real-world datasets. Experimental results demonstrate that FairGSE achieves competitive accuracy and group fairness compared to SOTA methods without significantly lifting FPR.
\end{itemize}

\section{Related Work}

\subsubsection{Fairness in Graph}
Fairness in GNNs is typically studied under two paradigms: \textit{group fairness}~\cite{NIFTY, FairGNN, dong2022structural, cong2023fairsample} and \textit{individual fairness}~\cite{dong2021individual, kang2020inform, zhang2024unraveling}. Group fairness, commonly measured by Statistical Parity ($\Delta_{\textit{SP}}$) \cite{dwork2012fairness} and Equal Opportunity ($\Delta_{\textit{EO}}$) \cite{hardt2016equality}, aims to ensure equal treatment across sensitive groups, and has been the primary focus in fair GNN research. For instance, FairGNN \cite{FairGNN} leverages adversarial learning to minimize reliance on sensitive attributes, while FairVGNN \cite{wang2022improving} and FairINV \cite{zhu2024one} employ dynamic masking and invariant learning to suppress spurious correlations, respectively. Representation-level interventions such as EDITS \cite{dong2022edits} adjust features and structure distributions to mitigate bias, DAB-GNN \cite{lee2025disentangling} further utilizes the Wasserstein distance to constrain the differences between various representations in order to debias. FairSIN \cite{yang2024fairsin} neutralizes sensitive information using a heterogeneous neighbor mean representation estimator. Recently, group fairness has been extended to Graph Transformers~\cite{luo2025fairgp}. Despite these advances, most existing methods focus solely on optimizing fairness metrics (e.g., $\Delta_{\textit{SP}}$ and $\Delta_{\textit{EO}}$). As we show in this work, enforcing group fairness alone can lead to trivial or misleading solutions, such as classifying most nodes as positive, achieving optimal fairness at the cost of a high FPR. 
    
\subsubsection{Structural Entropy}
Information entropy (a.k.a. Shannon entropy) \cite{shannon1948mathematical} is a fundamental measure in information theory that quantifies the uncertainty of a random variable or probability distribution. 
Extending this concept to the graph domain, structural entropy (SE) \cite{li2016structural} is designed to measure the uncertainty associated with the hierarchical partition of graphs.
 Specifically, the nodes of a graph can be partitioned into different groups (e.g., communities or clusters), and each group can be further divided into hierarchical subgroups. This hierarchical partition of the graph can be naturally abstracted as an encoding tree \cite{li2016structural, li2018decoding}, and each tree node represents a specific group. $N$-dimensional structural entropy, which quantifies the number of bits required to encode a graph given an $n$ layers encoding tree, has been increasingly adopted in various graph learning tasks, including adversarial robustness \cite{liu2022residual}, node classification \cite{duan2024structural}, and anomaly detection \cite{zeng2024adversarial,yang2024sebot,xian2025community}. In our work, we leverage two-dimensional structural entropy, which is defined over a one-step partition of nodes based on sensitive attributes, as a principled measure to balance message aggregation across sensitive groups and to avoid FPR shortcut.

\section{Preliminaries}
\subsubsection{Notation}
In this paper, we denote matrices with boldface uppercase letters (e.g., $\mathbf{H}$) and represent the $(i,j)$-th entry of $\mathbf{H}$ as $\mathbf{H}_{(i,j)}$. We denote sets with calligraphic fonts (e.g., $\mathcal{V}$) and the number of elements in the set $\mathcal{V}$ as $|\mathcal{V}|$. Given an undirected attribute graph $G = (\mathcal{V}, \mathbf{A}, \mathbf{X})$, where $\mathcal{V} = \{v_1, v_2, \dots, v_n\}$ is the set of nodes, $\mathbf{X} \in \mathbb{R}^{n \times d}$ is the feature matrix of dimension $d$ and $\mathbf{A} \in \mathbb{R}^{n \times n}$ is the adjacency matrix. For weighted graphs, $\mathbf{A}_{(i, j)}$ denotes the edge weight between $v_i$ and $v_j$; for unweighted graphs, $\mathbf{A} \in \{0,1\}^{n \times n}$, where $\mathbf{A}_{(i,j)} = 1$ indicates the existence of an edge $e_{(i,j)}$. We use $y_i$ to denote the ground-truth label of node $v_i$. For simplicity, we follow previous work\cite{FairGNN, wang2022improving, dong2022edits, yang2024fairsin} and assume that each node $v_i$ has a binary sensitive attribute $s_i \in \{0, 1\}$ and a binary ground-truth label $y_i \in \{0, 1\}$, which we define $y_i = 1$ as a \textit{positive} label and $y_i = 0$ as a \textit{negative} label. We use $\mathcal{S} = \{s_1, s_2, \cdots, s_n\}$ and $\mathcal{Y} = \{y_1, y_2, \cdots, y_n\}$ to denote the set of sensitive attributes and ground-truth labels, respectively. A GNN model $\textit{f}$ consists of a GNN encoder $\textit{f}_\theta$ and a linear classifier $c_\phi$. For the node classification task, 
a GNN model $f$ learns the prediction of the label $\hat{y_i} = c_\phi(f_\theta (v_i, G))$ for node $v_i \in \mathcal{V}$. 

\subsubsection{Fairness Metrics}
To quantify group fairness, we follow studies \cite{dwork2012fairness,hardt2016equality} to adopt statistical parity ($\Delta_{SP}$):
\begin{equation}
\label{eq:sp}
    \Delta_{\textit{SP}}=|P(\hat{y}_v=1|s_v=0)-P(\hat{y}_v=1|s_v=1)|,
\end{equation}
where $P(\hat{y}_v=1|s_v=i)$ denotes the probability that a node $v$ with the sensitive attribute $s_v = i$ is predicted as a positive outcome ($\hat{y}_v = 1$), which can be interpreted as the acceptance rate of a sensitive group. And equal opportunity ($\Delta_{EO}$):
\begin{equation}
\label{eq:eo} 
\begin{aligned}
\Delta_{\textit{EO}}=|P(\hat{y}_v=1|s_v=0, y_v=1)\\
- P(\hat{y}_v=1|s_v=1, y_v=1)|,
\end{aligned}
\end{equation}
where $P(\hat{y}_v=1|s_v=i, y_v=1)|$ denotes the probability that a node $v$ with the sensitive attribute $s_v = i$ and the positive label $y_v = 1$ is predicted as a positive outcome ($\hat{y}_v = 1$), which can be considered the true positive rate of a sensitive group. Therefore, $\Delta_{\textit{SP}}$ and $\Delta_{\textit{EO}}$ measure the disparity in acceptance rates (i.e., positive predictions) and true positive rates between different sensitive groups, respectively. Note that a model with lower $\Delta_{\textit{SP}}$ and $\Delta_{\textit{EO}}$ implies better group fairness performance. In the context of node classification, a model that predicts all nodes as positive outcomes implies that both acceptance rate and true positive rate are equal to 1, thereby leading to perfect fairness, i.e., $\Delta_{\textit{SP}}=\Delta_{\textit{EO}}=0$. However, this may result in an excessively high FPR.

\section{Theoretical Analysis}
To address the above problem, we refine the notion of two-dimensional structural entropy proposed in \cite{li2016structural}, which measures the uncertainty of a graph given by the partition of nodes. To incorporate this notion, we formally define the two-dimensional structural entropy w.r.t. sensitive attribute as follows:

\begin{definition}[2D-SE w.r.t. sensitive attribute]
\label{eq:2d-se}
Given a graph $G$, the node set $\mathcal{V}$ of $G$ is partitioned by $\mathcal{P}_S = \{\mathcal{V}_{S_0}, \mathcal{V}_{S_1}\}$, where the \textit{sensitive group} $\mathcal{V}_{S_i}$ denotes a group of nodes with the same sensitive attribute. The two-dimensional structural entropy (2D-SE) w.r.t. sensitive attribute $H^{\mathcal{P}_S}(G)$ of $G$ is:
\begin{equation}
\fontsize{8pt}{10pt}\selectfont
\begin{aligned}
    H^{\mathcal{P}_S}(G) &= -\sum_{\mathcal{V}_{S_i}\in \mathcal{P}_S} \frac{vol(\mathcal{V}_{S_i})}{vol(G)} 
    \sum_{v \in \mathcal{V}_{S_i}} \frac{d_v}{vol(\mathcal{V}_{S_i})} 
    \log \frac{d_v}{vol(\mathcal{V}_{S_i})} \\
    &- \sum_{\mathcal{V}_{S_i}\in \mathcal{P}_S} \frac{g(\mathcal{V}_{S_i})}{vol(G)} 
    \log \frac{vol(\mathcal{V}_{S_i})}{vol(G)},
\end{aligned}
\end{equation}
\end{definition}
where the degree of node $v$ is $d_v= \sum_{u \in N(v)}{\mathbf{A}_{(v,u)}}$ and $N(v)$ is the set of neighbors of $v$. For a sensitive group $\mathcal{V}_{S_i} \in \mathcal{P}_{S}$, the volume $vol(\mathcal{V}_{S_i})$ is defined as the sum of the degrees of all nodes in $\mathcal{V}_{S_i}$. The volume $vol(G)$ denotes the sum of the degrees of all nodes in $G$. $g(\mathcal{V}_{S_i})$ is the sum of the weights of the edges with exactly one endpoint in $\mathcal{V}_{S_i}$.

Our motivation for using $H^{\mathcal{P}_S}(G)$ stems directly from the intuition behind its definition. From the perspective of fairness in GNNs, a higher $H^{\mathcal{P}_S}(G)$ encourages message aggregation across different sensitive groups, preventing GNNs from overfitting to group-specific information. This reduces the dependency of node representations on sensitive attributes and thereby improves fairness. From the perspective of GNN prediction, a higher $H^{\mathcal{P}_S}(G)$ suggests that nodes with different labels within each sensitive group tend to exhibit more uniform degrees, which may help alleviate the issue of insufficient message aggregation for the negative label and thus reduce FPR.

To demonstrate the feasibility of optimizing $H^{\mathcal{P}_S}(G)$ for reducing fairness metrics ($\Delta_{\textit{SP}}$, $\Delta_{\textit{EO}}$) and FPR, we provide the following analysis:

\begin{lemma}[Optimization Feasibility]
\label{lem:1}
Under two assumptions: (1) $\text{vol}(G) > 0$; (2) $\text{vol}(\mathcal{V}_{S_i}) > 0,\ \forall \mathcal{V}_{S_i} \in \mathcal{P}_S$. the 2D-SE $H^{\mathcal{P}_S}(G)$ is differentiable w.r.t.\ every edge $\mathbf{A}_{(i,j)}(>0)$, with:
\begin{align}
\frac{\partial H^{\mathcal{P}_S}(G)}{\partial \mathbf{A}_{(i,j)}} &= \frac{1}{\ln2} \sum_{i\in \{0, 1\}} (\alpha_{S_i} + \beta_{S_i}),
\end{align}
\end{lemma}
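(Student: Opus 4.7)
The plan is to compute the partial derivative directly by splitting $H^{\mathcal{P}_S}(G)$ into its two summands and tracking how each primitive quantity depends on the weight $\mathbf{A}_{(i,j)}$. First I would identify the atomic dependencies. Under the undirected convention, $\partial d_v/\partial \mathbf{A}_{(i,j)}$ equals $1$ if $v\in\{i,j\}$ and $0$ otherwise; $\partial\,\mathrm{vol}(G)/\partial \mathbf{A}_{(i,j)}=2$; $\partial\,\mathrm{vol}(\mathcal{V}_{S_k})/\partial \mathbf{A}_{(i,j)}$ equals $2$, $1$, or $0$ depending on how many of $i,j$ lie in $\mathcal{V}_{S_k}$; and $\partial g(\mathcal{V}_{S_k})/\partial \mathbf{A}_{(i,j)}$ is $1$ when exactly one endpoint lies in $\mathcal{V}_{S_k}$ and $0$ otherwise. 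Assumptions (1) and (2), together with $\mathbf{A}_{(i,j)}>0$ forcing $d_i,d_j>0$, guarantee that every logarithm argument in $H^{\mathcal{P}_S}(G)$ is strictly positive, so the expression is a composition of $C^1$ maps and differentiability is immediate.

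Next I would rewrite the first summand using the algebraic identity
\begin{equation*}
    H_1 = -\frac{1}{\mathrm{vol}(G)}\sum_{v}d_v\log d_v + \frac{1}{\mathrm{vol}(G)}\sum_{k}\mathrm{vol}(\mathcal{V}_{S_k})\log\mathrm{vol}(\mathcal{V}_{S_k}),
\end{equation*}
which separates the node-level contribution from the group-level contribution. Applying the product rule to $1/\mathrm{vol}(G)$ against each sum, together with the standard identity $\partial(x\log_2 x)/\partial x = \log_2 x + 1/\ln 2$, yields a derivative that organizes naturally as a sum indexed by the sensitive group; I would denote the group-$k$ share by $\alpha_{S_k}$. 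The second summand $H_2$ is handled analogously: differentiating the product $g(\mathcal{V}_{S_k})\log(\mathrm{vol}(\mathcal{V}_{S_k})/\mathrm{vol}(G))$ via the product rule yields a per-group expression which I would denote $\beta_{S_k}$, again with a $1/\ln 2$ factor from each logarithmic derivative.

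The last step is the case analysis based on whether the edge $(i,j)$ lies inside a single sensitive group or bridges the two groups, since the derivatives of $\mathrm{vol}(\mathcal{V}_{S_k})$ and $g(\mathcal{V}_{S_k})$ behave differently in the two regimes. In each regime the pieces coming from $H_1$ and $H_2$ can be collected into $\alpha_{S_k}+\beta_{S_k}$ per group, summed over $k\in\{0,1\}$, and the universal $1/\ln 2$ prefactor can be pulled outside. The main obstacle will be the bookkeeping: $\partial g(\mathcal{V}_{S_k})/\partial\mathbf{A}_{(i,j)}$ acts as a case-dependent indicator, and arranging the intra-group and cross-group terms so that both regimes collapse into a single unified formula requires careful tracking of signs coming from the additive corrections to $\mathrm{vol}(\mathcal{V}_{S_k})$ and $\mathrm{vol}(G)$. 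A useful sanity check at the end is that the resulting derivative is invariant under swapping the two sensitive group labels, which is manifest in the symmetric summation $\sum_{i\in\{0,1\}}(\alpha_{S_i}+\beta_{S_i})$ of the target expression.
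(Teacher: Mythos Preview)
Your proposal is correct and follows essentially the same route as the paper: both identify the atomic derivatives $\partial d_v/\partial \mathbf{A}_{(i,j)}$, $\partial\mathrm{vol}(G)/\partial \mathbf{A}_{(i,j)}$, $\partial\mathrm{vol}(\mathcal{V}_{S_k})/\partial \mathbf{A}_{(i,j)}$, $\partial g(\mathcal{V}_{S_k})/\partial \mathbf{A}_{(i,j)}$ and then apply the product and chain rules to the two summands of $H^{\mathcal{P}_S}(G)$, collecting the results per sensitive group as $\alpha_{S_k}$ and $\beta_{S_k}$. The only minor deviation is that the paper keeps the first summand in the form $\tfrac{\mathrm{vol}(\mathcal{V}_{S_i})}{\mathrm{vol}(G)}\,H^{\text{intra}}_{\mathcal{V}_{S_i}}$ and differentiates that nested product directly, whereas you first flatten it via the identity $H_1=-\tfrac{1}{\mathrm{vol}(G)}\sum_v d_v\log d_v+\tfrac{1}{\mathrm{vol}(G)}\sum_k \mathrm{vol}(\mathcal{V}_{S_k})\log\mathrm{vol}(\mathcal{V}_{S_k})$; this is a cosmetic difference that does not change the argument.
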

where $\alpha_{S_i}$ and $\beta_{S_i}$ are scalar coefficients that are only related to the degrees of the nodes within the sensitive group $\mathcal{V}_{S_i}$, the detailed proof of this lemma is given in Appendix. Hence $H^{\mathcal{P}_S}(G)$ is differentiable and enables gradient-based optimization.

\begin{theorem}[Fairness via 2D-SE bound]\label{thm:1}
Maximizing the 2D-SE of the sensitive attribute partition yields provable upper bounds on both $\Delta_{SP}$ and $\Delta_{EO}$. Formally,
    \begin{align}
        \Delta_{\textit{SP}} \le \sqrt{2(H^{max}-H^{\mathcal{P}_S}(G))}, \\
    \Delta_{\textit{EO}} \le \sqrt{2(H^{max}-H^{\mathcal{P}_S}(G))},
    \end{align}
    where $H^{\max}=\log_2|\mathcal{V}|-\sum_{\mathcal{V}_{S_i} \in \mathcal{P}_S}\frac{|\mathcal{V}_{S_i}|}{|\mathcal{V}|}\log_2\frac{|\mathcal{V}_{S_i}|}{|\mathcal{V}|}$ is the maximum achievable 2D-SE under the partition $\mathcal{P}_S$ when the graph is fully random and every node has uniform degree.
\end{theorem}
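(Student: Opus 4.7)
The distinctive $\sqrt{2(\cdot)}$ form of the right-hand side is the fingerprint of Pinsker's inequality, so my plan is to prove both bounds by (i) re-expressing $\Delta_{\textit{SP}}$ and $\Delta_{\textit{EO}}$ as total-variation distances between two Bernoulli distributions, (ii) invoking Pinsker to pass from TV to KL, and (iii) showing that the resulting KL divergence is upper-bounded by the entropy deficit $H^{max} - H^{\mathcal{P}_S}(G)$. Because $\Delta_{\textit{EO}}$ differs from $\Delta_{\textit{SP}}$ only through the additional conditioning event $\{y_v = 1\}$, both inequalities follow from the same template once (i)--(iii) are in place.

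Steps (i) and (ii) are largely mechanical. By Eqs.~\eqref{eq:sp} and~\eqref{eq:eo}, each fairness gap is precisely the total-variation distance $d_{TV}(\mathrm{Ber}(p_0), \mathrm{Ber}(p_1))$ between two Bernoulli random variables indexed by $s_v$. Pinsker's inequality gives $d_{TV} \le \sqrt{\tfrac{1}{2} D_{KL}}$, and passing through a balanced reference distribution $U$ via the triangle inequality $d_{TV}(P_0, P_1) \le d_{TV}(P_0, U) + d_{TV}(P_1, U)$ and applying Pinsker to each side accounts for the constant $2$ under the square root. For step (iii) I would exploit the decomposition in Definition~\ref{eq:2d-se}: its first summand is a volume-weighted intra-group Shannon entropy of the normalized degrees $d_v / vol(\mathcal{V}_{S_i})$, whose gap from its maximum $\sum_i \tfrac{vol(\mathcal{V}_{S_i})}{vol(G)}\log|\mathcal{V}_{S_i}|$ is exactly a weighted KL divergence from uniform; combining this with the analogous analysis of the cross-group term and the compensating $\log_2|\mathcal{V}|$ contribution in $H^{max}$ should yield $H^{max} - H^{\mathcal{P}_S}(G) \ge D_{KL}(P_{\mathrm{obs}} \| U)$, closing the chain.

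The principal obstacle is bridging graph-level quantities (degree ratios, volumes, cut weights) and the model-level acceptance probabilities $P(\hat{y}_v = 1 \mid s_v = i)$ that appear in the fairness metrics. Since the theorem is stated in a GNN-agnostic fashion, the argument must either invoke a structural assumption identifying the per-group acceptance rate with an aggregate functional of the within-group degree distribution (e.g., that a trained GNN's positive-prediction rate on $\mathcal{V}_{S_i}$ is controlled by $\{d_v/vol(\mathcal{V}_{S_i})\}$), or take a worst-case view over all classifiers consistent with the graph topology. Pinning down the reference distribution $U$ so that the post-Pinsker KL matches a \emph{single} entropy deficit on both fairness sides simultaneously, without introducing extra slack factors and with the base-$2$ vs.\ natural-log constants traced correctly, will be the delicate book-keeping step.
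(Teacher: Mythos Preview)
Your diagnosis of the Pinsker fingerprint is correct, and your step (i) is fine, but the proposal has a genuine gap at exactly the point you flag as the ``principal obstacle'': you never actually bridge the graph-level degree/volume statistics to the classifier's acceptance rates. The paper resolves this with two ingredients you are missing. First, it does not work with an ad hoc KL-to-uniform; it shows the entropy deficit is \emph{exactly} a mutual information, $H^{\max}-H^{\mathcal{P}_S}(G)=I(\mathcal{U};S)$, where $\mathcal{U}$ is the random node drawn from the stationary (degree-proportional) distribution of the walk on $G$. This reframes the 2D-SE gap as the information the graph topology carries about the sensitive attribute. Second, and crucially, it invokes the data-processing inequality on the Markov chain $S\to\mathcal{U}\to\mathbf{H}$ to obtain $I(\mathbf{H};S)\le I(\mathcal{U};S)=\Delta H$. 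This is what lets the bound hold for \emph{any} GNN encoder $f_\theta$ without the structural assumption or worst-case analysis you were contemplating; neither of your proposed workarounds would produce a classifier-agnostic statement.

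Your step (ii) is also off-route. Passing through a balanced reference $U$ and summing two Pinsker terms would at best give $\sqrt{\tfrac12 D_{KL}(P_0\|U)}+\sqrt{\tfrac12 D_{KL}(P_1\|U)}$, which neither matches the constant nor collapses to a single $\sqrt{2\Delta H}$. The paper instead bounds $|P(f=1\mid S=0)-P(f=1\mid S=1)|$ directly by $\sqrt{2\,\mathrm{JS}(P_{\mathbf{H}\mid S=0}\|P_{\mathbf{H}\mid S=1})}$ and then uses the standard inequality $\mathrm{JS}(P_{\mathbf{H}\mid S=0}\|P_{\mathbf{H}\mid S=1})\le I(\mathbf{H};S)$, which chains cleanly with the data-processing step. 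The $\Delta_{\textit{EO}}$ bound then follows by the same chain after conditioning on $y=1$, as you anticipated.
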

\noindent\textbf{Proof Sketch.}
(1) We first rewrite $H^{\mathcal{P}_S}(G)$ as the entropy of a single-step random walk encoding.  (2) Applying the data-processing inequality yields $I(\mathbf{H};\mathcal{S})\leq H^{\max}-H^{\mathcal{P}_S}(G)$, where $I(\mathbf{H};\mathcal{S})$ represents the mutual information between the node representation $\mathbf{H} = f_\theta(\mathcal{V}, G)$ and sensitive attribute $\mathcal{S}$.  
(3) Pinsker-type inequalities bound $\Delta_{\textit{SP}}$ and $\Delta_{\textit{EO}}$ by $\sqrt{2I(\mathbf{H};\mathcal{S})}$.  
Full proofs are deferred to Appendix.
\begin{theorem}[FPR via 2D-SE bound] \label{thm:2}
Let $r = \frac{|\{v | y_v=0\}|}{|\mathcal{V}|}$ be the proportion of negative label. For any classifier $c_\phi$ trained on node representations $\mathbf{H}$:
\begin{equation}
    \text{FPR} \le \frac{r}{1+r} + \sqrt{2(H^{max}-H^{\mathcal{P}_S}(G))}.
\end{equation}
\end{theorem}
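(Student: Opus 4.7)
The plan is to mirror the information-theoretic argument used for Theorem~\ref{thm:1}, but now apply it to the joint distribution of the prediction $\hat{Y}$ and the label $Y$ rather than to $\hat{Y}$ and $S$ alone. Because $\hat{y}=c_\phi(\mathbf{H})$ is a deterministic post-processing of $\mathbf{H}$, the data-processing inequality immediately gives $I(\hat{Y};\mathcal{S}) \le I(\mathbf{H};\mathcal{S}) \le H^{max} - H^{\mathcal{P}_S}(G)$, so the sensitive-aware disparities of the full classifier inherit exactly the 2D-SE gap already established in Theorem~\ref{thm:1}.

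First, I would decompose FPR by conditioning on the sensitive attribute: $\text{FPR} = P(\hat{y}=1 \mid y=0) = \sum_{s\in\{0,1\}} P(\hat{y}=1 \mid y=0, S=s)\,P(S=s \mid y=0)$. This exposes a \emph{per-group FPR} together with a sensitive-attribute reweighting, and reduces the task to controlling (i)~the spread between the two per-group FPRs and (ii)~a baseline FPR for one reference group. The spread is handled exactly as in Theorem~\ref{thm:1}: a Pinsker-type inequality applied to the conditional distributions $P(\hat{Y}\mid y=0, S=s)$, together with DPI, bounds $|P(\hat{y}=1\mid y=0, S=0) - P(\hat{y}=1\mid y=0, S=1)|$ by $\sqrt{2(H^{max} - H^{\mathcal{P}_S}(G))}$, which after a triangle-inequality step will furnish the $\sqrt{2(H^{max} - H^{\mathcal{P}_S}(G))}$ summand of the final bound.

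Second, I would extract the $r/(1+r)$ baseline by bounding the remaining per-group FPR via the joint event $\{\hat{y}=1, y=0\}$. Starting from $P(\hat{y}=1, y=0) \le P(\hat{y}=1)$ and using the marginal identity $P(\hat{y}=1) = \text{FPR}\cdot r + \text{TPR}\cdot(1-r)$, one can rearrange into a self-referential inequality of the form $x \le r(1-x) + \varepsilon$, whose solution is $x \le r/(1+r) + \varepsilon/(1+r)$. The factor $1/(1+r)$ is the algebraic fingerprint of this fixed-point rearrangement, and it is the natural route by which the negative-label base rate $r$ surfaces in the bound. Combining with the Pinsker step through a triangle inequality then yields the stated inequality.

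The hard part will be making this baseline extraction fully rigorous. Unlike $\Delta_{\textit{SP}}$ and $\Delta_{\textit{EO}}$, which are pure sensitive-attribute disparities that plug directly into Pinsker, FPR intertwines label structure (through $r$) with sensitive-attribute structure, and the 2D-SE gap on its own does not obviously pin down the marginal acceptance probability $P(\hat{y}=1)$. Nailing down exactly how $r/(1+r)$ enters — whether through the fixed-point rearrangement sketched above, through a Jensen-type refinement of the Markov bound on $P(\hat{y}=1, y=0)$, or through an auxiliary lemma relating $P(\hat{y}=1)$ to $r$ and the entropy gap — is the step that demands the most care; if the argument tacitly assumes a reasonably trained (rather than adversarially degenerate) classifier, that assumption should be surfaced explicitly when the full proof is written out.
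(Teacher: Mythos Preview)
Your overall architecture matches the paper's proof almost exactly: the paper also (i)~writes $\mathrm{FPR}=\sum_{s}\Pr(S=s\mid Y=0)\,p_s$ with $p_s=\Pr(\hat y=1\mid S=s,Y=0)$, (ii)~bounds the cross-group disparity $|p_0-p_1|$ by the JS--Pinsker route $|p_0-p_1|\le\sqrt{2I(\mathbf{H};S)}\le\sqrt{2\Delta H}$, and (iii)~combines via $\mathrm{FPR}\le\max(p_0,p_1)\le\tfrac{p_0+p_1}{2}+\tfrac{|p_0-p_1|}{2}$. So your plan for the $\sqrt{2\Delta H}$ summand is exactly right and coincides with the paper.

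Where the two differ is precisely the step you flag as hard. Your fixed-point sketch, as written, does not go through: the starting inequality $P(\hat y=1,y=0)\le P(\hat y=1)$ is $\mathrm{FPR}\cdot r\le \mathrm{FPR}\cdot r+\mathrm{TPR}\cdot(1-r)$, which is vacuously true and cannot be rearranged into anything of the form $x\le r(1-x)+\varepsilon$; you would need an additional constraint tying $P(\hat y=1)$ (or $\mathrm{TPR}$) back to $\mathrm{FPR}$ before any self-referential inequality appears. That said, your instinct that this is the fragile step is entirely vindicated by the paper itself: in the paper's Step~3 the bound $(p_0+p_1)/2\le r/(1+r)$ is not derived but simply \emph{asserted} ``under the worst-case assumption on $p_0$ and $p_1$,'' with a one-line gesture toward ``class imbalance.'' In other words, the paper does not supply a rigorous derivation of the $r/(1+r)$ baseline either; your caution about an implicit reasonably-trained-classifier assumption is exactly the right disclaimer to surface.
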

\textbf{Proof Sketch.} (1) The FPR is decomposed as a weighted average over sensitive groups conditioned on the negative label. (2) Using the Jensen-Shannon Pinsker inequality, we bound the difference in FPRs between sensitive groups by $H^{\max}-H^{\mathcal{P}_S}(G)$. (3) Under the worst-case configuration where the difference in FPR between sensitive groups is maximal, the overall FPR is upper-bounded by combining this disparity with the ratio $r$ and full proofs are deferred to the Appendix.

\noindent\textbf{Summary}. Theorems \ref{thm:1} and \ref{thm:2} show that optimizing the graph structure to maximize 2D-SE can improve GNN fairness and avoid FPR shortcuts. This encourages more balanced message aggregation from different sensitive groups, and effectively reduces the influence of group imbalance in node representations.

\section{The Proposed Framework: FairGSE}

\begin{figure*}[htbp]
\centerline{\includegraphics[scale=0.28]{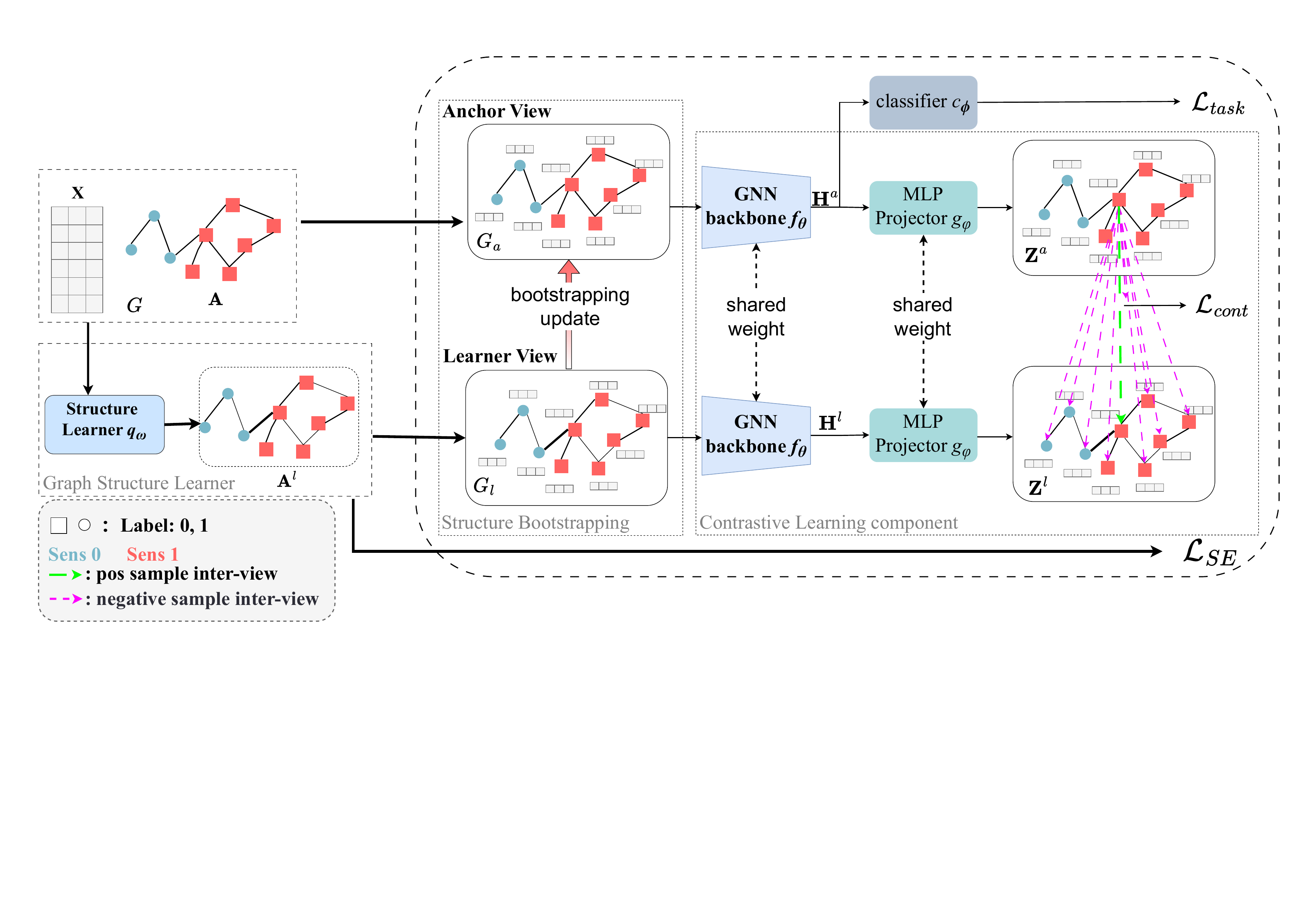}}
\caption{Overview of FairGSE, which consists of graph structure learner, contrastive learning component and structure bootstrapping mechanism.}
\label{fig:FairGSE}
\end{figure*}

\subsection{Overview}

Guided by the theoretical insights above, FairGSE aims to enhance fairness and avoid FPR shortcut by optimizing the 2D-SE of the graph structure, while preserving the critical information in the original graph structure through contrastive learning. An overview of FairGSE is illustrated in Figure~\ref{fig:FairGSE}. First, a graph structure learner $q_\omega$ learns a trainable adjacency matrix $\mathbf{A}^l$ by optimizing edge weights via 2D-SE maximization. Second, a contrastive learning component constructs two graph views: the original graph as the anchor view, and the learner view, defined as the learnable graph $G_l$ formed by $\mathbf{A}^l$. Both views are encoded by a shared GNN, and their projected node representation are aligned using a contrastive loss to preserve structural consistency while promoting fairness. Finally, we employ a structure bootstrapping mechanism to progressively refine the anchor view, aiming to mitigate inherited biases from the original graph and prevent potential overfitting during training.

\subsection{Graph Structure Learner}

The graph structure learner $q_\omega$ in FairGSE learns a learnable graph $G_l$ by optimizing a trainable adjacency matrix $\mathbf{A}^l$, with the goal of maximizing the 2D-SE $H^{\mathcal{P}_{S}}(G_l)$. Initially, $q_\omega$ associates each existing edge $e_{(i,j)}$ in the original graph (where $\mathbf{A}_{(i,j)} > 0$) with a trainable parameter $a_{(i,j)}$. During each training step, these parameters are transformed via a sigmoid function $\sigma(\cdot)$ to produce edge weights in $(0,1)$: $\mathbf{A}^l_{(i,j)} = \sigma(a_{(i,j)})$, thus constructing the weighted adjacency matrix $\mathbf{A}^l$ of $G_l$. The 2D-SE $H^{\mathcal{P}_{S}}(G_l)$ is then computed on $G_l$ according to Definition~\ref{eq:2d-se}. To maximize $H^{\mathcal{P}_{S}}(G_l)$, we minimize the negative entropy objective: \begin{equation} 
\min\ \mathcal{L}_{SE} = -H^{\mathcal{P}_S}(G_l). 
\end{equation} 
Gradients are backpropagated through the computational graph to update the parameters $a_{(i,j)}$. Specifically, the gradient with respect to each $a_{(i,j)}$ is: 
\begin{equation} 
\fontsize{9pt}{10pt}\selectfont
\begin{aligned}
\nabla_{a_{(i,j)}} H^{\mathcal{P}_S}(G_l) = \frac{\partial H^{\mathcal{P}_S}(G_l)}{\partial \mathbf{A}^l} \cdot \frac{\partial \mathbf{A}^l}{\partial \sigma(a_{(i,j)})} \cdot \frac{\partial \sigma(a_{(i,j)})}{\partial a_{(i,j)}}.
\end{aligned}
\end{equation} 

\subsection{Contrastive Learning Component}
We note that only using the 2D-SE may struggle to optimize due to two fundamental challenges: (1) \textit{Optimization instability}. $q_\omega$ updates the entire adjacency matrix $\mathbf{A}^l$ at each training step, continuously changing the graph structure and hindering stable optimization of the GNN encoder. (2) \textit{Structure distortion}. Optimizing the learnable graph $G_l$ may lead to deviations from the original graph structure, which can indirectly degrade the classification performance. To mitigate such deviations, we incorporate a contrastive learning objective.

\noindent\textbf{Graph View Establishment}. \textit{Anchor view} supplies the critical information in the original graph by adopting the original adjacency matrix $\mathbf{A}$. We init anchor view as $G_a=(\mathcal{V}, \textbf{A}^a, \textbf{X})=(\mathcal{V}, \textbf{A}, \textbf{X})$. To provide a more effective fairness enhancement, anchor view is not updated by gradient descent but a structure bootstrapping mechanism, which will be introduced in the next subsection. Unlike standard graph contrastive learning, our \textit{learner view} $G_l$ is optimized by graph structure learner $q_{\omega}$, which enables simultaneous fairness enhancement and representation learning. The learner view is denoted as $G_l=(\mathcal{V}, \textbf{A}^l, \textbf{X})$.

Both views share the same GNN encoder $f_\theta$ and projector $g_\varphi$ (2-layer MLP):
\begin{equation}
    \begin{aligned}
\mathbf{Z}^a &= g_\varphi(f_\theta (\mathcal{V}, G_a))= g_\varphi(\mathbf{H}^a), \\
\mathbf{Z}^l &= g_\varphi(f_\theta (\mathcal{V}, G_l))= g_\varphi(\mathbf{H}^l),
\end{aligned}
\end{equation}
where $\mathbf{H}^a,\mathbf{H}^l$ are the node representation matrices and $\mathbf{Z}^a,\mathbf{Z}^l$ are the projected node representation matrices for anchor view and learner view, respectively.

A symmetric normalized temperature-scaled cross-entropy loss (NT-Xent) \cite{chen2020simple} is then applied to maximize the agreement between the corresponding projected node representation:
\begin{equation} 
   \fontsize{9pt}{10pt}\selectfont
   \begin{aligned}
       \mathcal{L}_{cont}=\frac{1}{2n}\sum_{i=1}^{n}\Bigl[l\Bigl(\mathbf{Z}^{a}_{(i,:)},\mathbf{Z}^{l}_{(i,:)}\Bigr) + l\Bigl(\mathbf{Z}^{l}_{(i,:)}, \mathbf{Z}^{a}_{(i,:)}\Bigr)\Bigr],
    \end{aligned}
\end{equation}
where $\mathbf{Z}^{a}_{(i,:)}$ and $\mathbf{Z}^{l}_{(i,:)}$ denote the $i$-th row of $\mathbf{Z}^{a}$ and $\mathbf{Z}^{l}$, and 
\begin{equation}
 l\Bigl(\mathbf{Z}^{a}_{(i,:)},\mathbf{Z}^{l}_{(i,:)}\Bigr) = \log \frac{e^{sim(\mathbf{Z}^{a}_{i,:},\mathbf{Z}^{l}_{(i,:)})/t}}{\sum_{k=1}^{n}{e^{sim(\mathbf{Z}^{a}_{(i,:)},\mathbf{Z}^{l}_{(k,:)})/t}}},
\end{equation}
$sim(\cdot, \cdot)$ is the cosine similarity function, and $t > 0$ is the temperature parameter. $l\Bigl(\mathbf{Z}^{l}_{(i,:)}, \mathbf{Z}^{a}_{(i,:)}\Bigr)$ is computed following $l\Bigl(\mathbf{Z}^{a}_{(i,:)},\mathbf{Z}^{l}_{(i,:)}\Bigr)$. 

Intuitively, the contrastive loss $\mathcal{L}_{cont}$ is leveraged to enforce maximizing the agreement between the projected representation $\mathbf{Z}^a_{(i,:)}$ and $\mathbf{Z}^l_{(i,:)}$ of the same node $i$ on anchor view and learner view. This joint update maximizes 2D-SE while constraining the deviation of $G_l$ from the original graph structure. Theoretical support for this approach is provided by Theorem 2 in \cite{ling2023learning}, which states that $-\mathcal{L}_{cont} \le I(G_a, G_l)$. This inequality indicates that minimizing the contrastive loss $\mathcal{L}_{cont}$ is equivalent to maximizing the mutual information $I(G_a, G_l)$ between the anchor view and learner view, thereby ensuring that the learned representations capture the essential structural information of the graph.

\subsection{Structure Bootstrapping Mechanism}\label{subsec:bootstrapping}
While fixing the anchor view as the original graph structure $\textbf{A}^a = \textbf{A}$ simplifies contrastive learning, it introduces two barriers to fairness optimization: (1) \textbf{Inherited Bias}. The anchor view inherits biases from the original graph, which fundamentally limits the fairness performance of FairGSE. (2) \textbf{Progressive Overfitting}. Due to the fixed anchor view, the learner view is repeatedly trained to align with the bias hidden within the anchor view, ultimately inheriting similar unfairness.

To address these issues, inspired by \cite{grill2020bootstrap,liu2022towards}, we design a structure bootstrapping mechanism to provide a self-enhancing anchor view. The core idea of our approach is to update the anchor structure $\textbf{A}^a$ with a slow-moving augmentation of the learned structure $\textbf{A}^l$. In particular, given a decay rate $\tau \in [0, 1]$, the anchor structure $\textbf{A}^a$ is updated in epochs as follows:
\begin{equation}
\textbf{A}^{a} = \tau \textbf{A}^{a} + (1-\tau)\textbf{A}^{l},
\end{equation}
where $\tau$ are set to $0.9999$ in this paper. Benefiting from the structure bootstrapping mechanism, FairGSE gradually increases the uncertainty of sensitive attribute in the anchor view by incorporating high 2D-SE from $\textbf{A}^l$ into $\textbf{A}^a$.

\subsection{Training Objective}

Assembling the previously discussed components, the final objective function of FairGSE is depicted in Equation \ref{eq:finalLoss}: 
\begin{equation}
\label{eq:finalLoss}
    	\text{min}\ \mathcal{L} = \mathcal{L}_{task} + \lambda_1 \mathcal{L}_{cont} - \lambda_2 \mathcal{L}_{\textit{SE}}.
\end{equation}

This loss function consists of three parts and is controlled by the tunable hyperparameters $\lambda_1$ and $\lambda_2$ to balance the contributions of the various elements in the overall loss function. Since we focus on the task of node classification, the first term $\mathcal{L}_{\text{task}}$ aims to minimize the classification loss. Specifically, the prediction $\hat{y}_i$ for node $v_i$ is obtained by passing $G_a$ into the GNN encoder $f_\theta$, followed by a linear classifier $c_\phi$, i.e., $\hat{y}_i = c_\phi(f_\theta(v_i, G_a))$. The classification loss is defined as $\mathcal{L}_{\text{task}} = \frac{1}{|\mathcal{V}|} \sum_{i=1}^n [y_i \log(\hat{y}_i) + (1 - y_i)\log(1 - \hat{y}_i)]$.

\begin{table*}[t]
\centering
\setlength{\tabcolsep}{0.68mm}
\begin{tabular}{c|c|cccccccc|c}
\toprule
\textbf{Datasets}                 & \textbf{Metrics} & \textbf{Vanilla GCN}  & \textbf{FairGNN}       & \textbf{EDITS}       & \textbf{FairVGNN}     & \textbf{FairSIN}   & \textbf{FairINV} & \textbf{DAB-GNN} & \textbf{FairGP}  & \textbf{FairGSE} \\
\midrule

\multirow{6}{*}{\textbf{Credit}} & ACC & 75.71$_{0.47}$ & 73.40$_{3.49}$ & 73.35$_{0.40}$ & 70.79$_{0.90}$ & \underline{77.88$_{0.01}$} & 57.42$_{18.73}$ & 77.02$_{1.74}$ & \textbf{77.96$_{7.00}$} & 75.08$_{0.13}$\\                         & AUC & \underline{74.07$_{1.40}$} & 70.26$_{2.94}$ & 71.63$_{0.66}$ & 70.79$_{0.90}$ & 71.99$_{0.86}$ & 71.06$_{2.60}$ & 62.23$_{13.15}$ & 64.23$_{5.82}$ & \textbf{74.53$_{0.39}$}\\
                 & F1  & 83.97$_{0.84}$ & 81.98$_{3.62}$ & 81.77$_{0.45}$ & \textbf{87.60$_{0.07}$} & \underline{87.56$_{0.01}$} & 59.92$_{30.12}$ & 86.11$_{2.21}$ & 87.11$_{0.31}$ & 83.29$_{0.16}$ \\
                 & FPR($\downarrow$)  & 45.96$_{11.38}$ & 46.48$_{21.67}$ & \underline{42.73$_{2.32}$} & 90.70$_{8.44}$ & 99.76$_{0.37}$ & 70.86$_{9.13}$ & 78.36$_{25.23}$ & 84.67$_{9.92}$ & \textbf{41.45$_{1.23}$}  \\
                  & $\Delta_{\textit{SP}} (\downarrow)$ & 14.63$_{4.27}$ & 8.61$_{3.86}$ & 11.54$_{4.81}$ & 3.06$_{2.85}$ & \textbf{0.52$_{0.45}$} & 4.85$_{3.31}$ & 1.61$_{2.19}$ & \underline{0.74$_{0.72}$} & 5.09$_{1.81}$ \\
              & $\Delta_{\textit{EO}} (\downarrow)$ & 12.31$_{4.14}$ & 6.84$_{3.19}$ & 9.66$_{4.81}$ & 1.43$_{1.38}$ & \textbf{0.40$_{0.34}$} & 3.17$_{3.52}$ & 1.34$_{1.71}$ & \underline{0.87$_{0.37}$} & 3.37$_{1.91}$  \\
\cmidrule{1-11}
\multirow{6}{*}{\textbf{Pokec\_n}} & ACC &69.72$_{0.71}$ & \underline{69.75$_{0.84}$} & OOM & 68.52$_{1.00}$ & 67.08$_{1.39}$ & 68.79$_{0.29}$ & 68.12$_{0.30}$ & 64.23$_{1.22}$ & \textbf{71.10$_{0.41}$} \\
                    & AUC & 75.22$_{1.76}$ & \underline{76.30$_{1.44}$} & 
                    OOM & 
                    75.16$_{0.36}$ & 71.39$_{0.89}$ & 73.68$_{0.12}$ & 73.70$_{0.10}$ & 67.43$_{1.99}$ & \textbf{77.24$_{0.15}$} \\
                    & F1  & \textbf{68.08$_{0.70}$} & 64.99$_{1.97}$ & 
                    OOM &
                    67.18$_{0.56}$ & 62.88$_{2.07}$ & 65.30$_{0.45}$ & \underline{67.55$_{0.05}$} & 54.25$_{4.73}$ & 67.22$_{0.70}$ \\
                     & FPR($\downarrow$)  & 27.06$_{3.58}$ & 19.10$_{6.44}$ &
                     OOM & 
                     30.43$_{5.30}$ & 23.92$_{8.89}$ & 23.78$_{0.12}$ & 32.45$_{1.29}$ & \underline{18.98$_{4.10}$} & \textbf{18.46$_{0.34}$} \\
                   & $\Delta_{\textit{SP}} (\downarrow)$ & 1.68$_{1.28}$ & 2.45$_{1.53}$ & 
                   OOM & 
                  \underline{0.94$_{0.84}$} & 1.06$_{0.93}$ & 1.01$_{0.88}$ & 1.96$_{1.56}$ & 1.06$_{0.75}$ & \textbf{0.90$_{0.46}$} \\
                  & $\Delta_{\textit{EO}} (\downarrow)$ &2.44$_{1.22}$ & 2.67$_{1.57}$ &
                  OOM &
                  2.53$_{0.95}$ & 1.99$_{1.02}$ & 2.51$_{1.17}$ & 2.39$_{1.62}$ & \underline{1.93$_{2.00}$} & \textbf{1.38$_{1.10}$}\\
\cmidrule{1-11}
\multirow{6}{*}{\textbf{Pokec\_z}}& ACC &69.62$_{0.77}$ & \underline{70.06$_{0.69}$} & OOM & 61.96$_{4.45}$ & 65.67$_{3.78}$ & 69.51$_{0.23}$ & 68.54$_{0.62}$ & 66.96$_{1.05}$ & \textbf{70.30$_{0.07}$}\\
                  & AUC &75.77$_{1.06}$ & \underline{77.43$_{0.30}$} &
                  OOM &
                  72.57$_{0.82}$ & 73.52$_{0.52}$ & 76.05$_{0.21}$ & 73.85$_{0.32}$ & 72.28$_{2.36}$ & \textbf{78.08$_{0.13}$}\\
                  & F1  & \underline{70.04$_{1.06}$} & 70.05$_{0.78}$ & 
                  OOM &
                  69.00$_{1.77}$ & 68.15$_{1.73}$ & \textbf{70.43$_{0.22}$} & 69.58$_{0.91}$ & 67.95$_{2.71}$ & \underline{70.04$_{0.12}$}\\
                & FPR($\downarrow$)  & 27.56$_{5.96}$ & 27.22$_{6.20}$ & 
                OOM &
                58.96$_{26.78}$ & 38.49$_{22.38}$ & \underline{26.62$_{0.21}$} & 29.92$_{2.92}$ & 32.30$_{11.83}$ & \textbf{22.80$_{0.75}$}\\
               & $\Delta_{\textit{SP}} (\downarrow)$ & 2.28$_{1.75}$ & \underline{1.30$_{0.71}$} & OOM & 2.95$_{0.77}$ & 1.40$_{0.75}$ & 4.75$_{0.70}$ & \textbf{0.96$_{0.73}$} & 3.19$_{1.48}$ & \textbf{0.96$_{0.25}$}\\
                  & $\Delta_{\textit{EO}} (\downarrow)$ & 2.13$_{1.62}$ & 2.57$_{0.84}$ & 
                  OOM & 
                  2.40$_{0.77}$ & \textbf{1.01$_{0.84}$} & 3.50$_{0.50}$ & 3.46$_{0.51}$ & 3.82$_{2.30}$ & \underline{1.38$_{0.11}$}\\
\bottomrule
\end{tabular}
\caption{Comparison results of FairGSE and baseline fairness methods on GCN. In each row, the best result is indicated in \textbf{bold}, while the runner-up result is marked with an \underline{underline}. OOM: out-of-memory on a GPU with 24GB memory.}
\label{tab:comparison}
\end{table*}

\section{Experiments}
 
\begin{table}
\centering
\setlength{\tabcolsep}{1mm}
\begin{tabular}{l|cccl}
\toprule
Dataset                      & Credit        &Pokec\_n         & Pokec\_z        \\ \cmidrule{1-4}
\# Nodes                      & 30,000            & 66,569         & 67,797         \\
\# Features                   & 14            & 266             & 277        \\
\# Edges                      & 2,873,716        & 1,100,663      & 1,303,712       \\
Positive label ratio & 0.78  & 0.51  & 0.54  \\
Sensitive attribute                        & Age        & Region            & Region \\

Avg. degree                  & 95.79        & 16.53       & 19.23    \\
Avg. inter-edge           & 3.84          & 0.73           & 0.90   \\
\bottomrule
\end{tabular}
\caption{Dataset statistics. `Avg.' means `Average number of', `inter-edge' means the edge with the two endpoints that have different sensitive attributes.}
\label{tab:dataset}
\end{table}

In this section, we evaluate and analyze the effectiveness of FairGSE. Specifically, we aim to answer the following questions: \textbf{RQ1:} Does FairGSE outperform baselines in addressing \textit{FPR shortcut}?. \textbf{RQ2:} What is the impact of each component in the FairGSE framework on its overall performance and fairness? \textbf{RQ3:} How sensitive is the performance of FairGSE to the hyperparameters $\lambda_1$ and $\lambda_2$? 

\subsection{Experimental Setup}
\noindent\textbf{Datasets.} Three real-world fairness datasets, namely Credit \cite{yeh2009comparisons}, Pokec\_n and Pokec\_z \cite{takac2012data}, are employed in our experiments. Table \ref{tab:dataset} provides key statistics for these datasets (more details in Appendix).
\noindent\textbf{Baselines.} We compare FairGSE with eight baselines, including Vanilla GCN with two layers and seven SOTA fairness-aware GNN methods: FairGNN \cite{FairGNN}, EDITS \cite{dong2022edits}, FairVGNN \cite{wang2022improving}, FairSIN \cite{yang2024fairsin}, FairINV \cite{zhu2024one}, DAB-GNN \cite{lee2025disentangling} and FairGP \cite{luo2025fairgp}. We use the source codes provided by the authors.
\noindent\textbf{Evaluation Metrics.} To evaluate the utility performance, we use ACC, F1, AUC and FPR as the metrics. To evaluate group fairness, we employ the fairness metrics $\Delta_{\textit{SP}}$ and $\Delta_{\textit{EO}}$.
\noindent\textbf{Implementation Details.}  We use Vanilla GCN as the backbone for both the baselines and FairGSE. Hyperparameter settings for all baseline methods adhere to the guidelines provided by the respective authors. For FairGSE, we use a 2-layer GCN with a hidden layer of size 64, the projector with a 32-dimensional 2-layer MLP and the classifier with a 2-layer MLP. The hyperparameter $\lambda_1$ for contrastive learning is tuned within the range of $\{0.1, 1\}$, $\lambda_2$ for 2D-SE is tuned within $\{0.5, 5\}$. We use the same learning rate and weight decay for all datasets, and they are 0.001 and 1e-5, respectively. Following previous studies, all the methods are evaluated by using a node classification task and splitting the nodes into training(50\% or the default setting of the number of label nodes), validation(25\%), and test (25\%) sets. All evaluations of FairGSE are conducted on a single NVIDIA RTX 3090 GPU with 24GB memory. All models are implemented with PyTorch and PyTorch-Geometric.

\subsection{Comparison Results (RQ1)}

To answer RQ1, we compare FairGSE with eight baseline methods across three real-world datasets. As shown in Table~\ref{tab:comparison}, FairGSE consistently outperforms all baselines in terms of AUC and FPR, while achieving competitive fairness performance.

On the imbalanced Credit dataset (78\% positive labels, cf. Table~\ref{tab:dataset}), FairGSE reduces the FPR by 9.81\% compared to Vanilla GCN, while decreasing $\Delta_{\textit{SP}}$ and $\Delta_{\textit{EO}}$ by 65.21\% and 72.62\%, respectively. When compared to the SOTA fairness method FairSIN, FairGSE improves AUC by 4.88\% and reduces FPR by 58.45\%, albeit with slightly higher $\Delta_{\textit{SP}}$ and $\Delta_{\textit{EO}}$ values. Notably, several baselines achieve near-perfect fairness (e.g., $\Delta_{\textit{SP}}$ and $\Delta_{\textit{EO}}$ close to zero) by exploiting the \textit{FPR shortcut}, as evidenced by their significantly high FPR values.

On the Pokec\_n and Pokec\_z datasets, which have more balanced label, FairGSE achieves the best or second-best performance across all evaluation metrics. For instance, on Pokec\_n, FairGSE achieves the best $\Delta_{\textit{SP}}$ performance and reduces the FPR by nearly 39.34\% compared to the second-best method. 

Overall, Table~\ref{tab:comparison} illustrates that FairGSE achieves the most favorable trade-off between group fairness and FPR control among all evaluated methods. These gains are attributed to its principled joint optimization strategy, which maximizes 2D-SE while preserving critical information in the original graph structure through contrastive learning.

\subsection{Ablation Study (RQ2)}
\begin{figure*}[htbp]
\centerline{\includegraphics[scale=0.458]{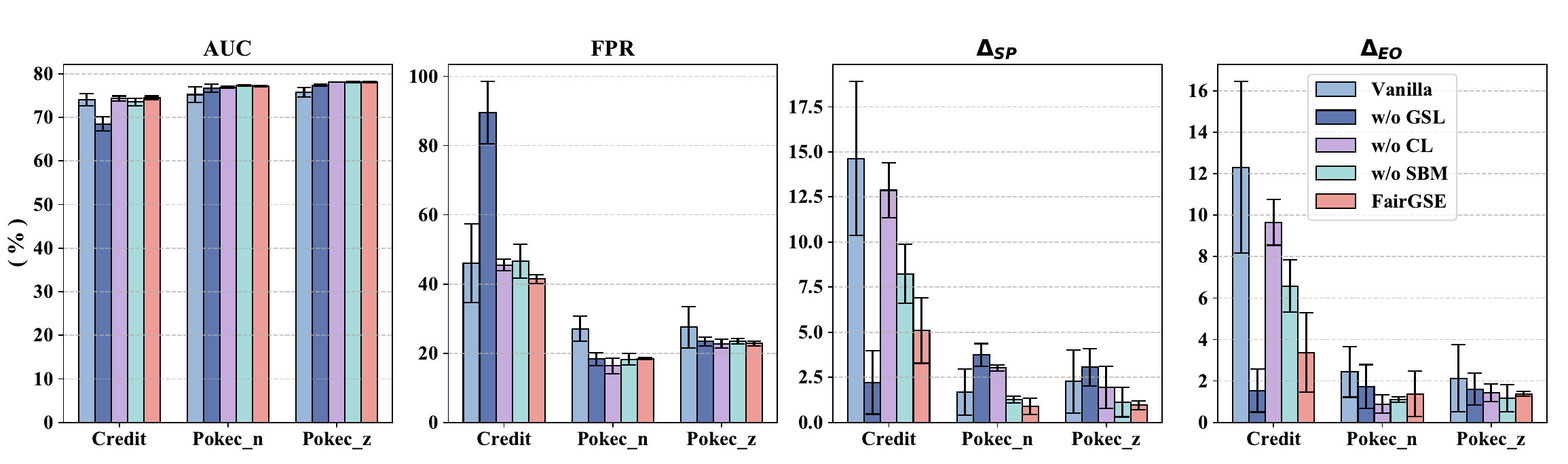}} 
\caption{Ablation study results for FairGSE on all datasets. Higher value indicating better performance for AUC, and lower values preferred for FPR, $\Delta_{\textit{SP}}$ and $\Delta_{\textit{EO}}$.}
\label{fig:ablation}
\end{figure*}

We conduct an ablation study to evaluate the impact of removing individual components from the core modules of FairGSE. Specifically, we consider the following variants: (a) \textbf{FairGSE without Graph Structure Learner (w/o GSL)}: this variant retains only the contrastive learning component. We follow the data augmentation strategy described in GraphCL \cite{you2020graph}, including random edge dropout, feature masking, and node dropping. (b) \textbf{FairGSE without Contrastive Learning (w/o CL)}: this variant removes the contrastive learning objective and trains the model solely by optimizing the adjacency matrix $\textbf{A}$ to maximize 2D-SE, while keeping all other settings unchanged. (c) \textbf{FairGSE without Structure Bootstrapping Mechanism (w/o SBM)}: In this variant, we evaluate the impact of removing the structure bootstrapping mechanism on model performance and fairness. The results across three datasets — Credit, Pokec\_n, and Pokec\_z — are shown in Figure~\ref{fig:ablation}, from which we make the following observations:\\
\noindent\textbullet\ (a) \textbf{FairGSE w/o GSL}: When graph structure learner is removed and only contrastive learning is used, the model exhibits a significantly higher FPR on the imbalanced Credit dataset. However, fairness metrics ($\Delta_{\textit{SP}}$ and $\Delta_{\textit{EO}}$) remain relatively strong. This is due to the fairness-enhancing effect of augmentations used in GraphCL, such as edge drop and feature masking, which have been shown in prior work to mitigate bias \cite{spinelli2021fairdrop}. Nevertheless, fairness strategies alone tend to cause a high FPR on imbalanced data, confirming our earlier analysis.

\noindent\textbullet\ (b) \textbf{FairGSE w/o CL}: When contrastive learning is removed and the model is trained solely via 2D-SE maximization, both FPR and group fairness metrics improve compared to Vanilla GCN. However, performance is less stable and falls short of the optimal trade-off achieved by the full FairGSE. This suggests that while graph structure learner can promote fairness, it continuously altering the graph structure and hindering stable optimization, thereby degrading classification performance. 

\noindent\textbullet\ (c) \textbf{FairGSE w/o SBM}: Our experimental results show that even without the structure bootstrapping mechanism, the model maintains comparable performance metrics to the full FairGSE. However, on the Credit dataset, we observe a notable improvement in fairness when the structure bootstrapping mechanism is included. This is because, without bootstrapping, the structure learner repeatedly aligns with a fixed, potentially biased anchor view. Then, the model is easier to learn the bias within the anchor view.

These findings underscore the importance of FairGSE’s joint optimization strategy: maximizing 2D-SE to promote fairness and reduce FPR while using contrastive learning to preserve the critical information in the original graph structure. As shown in Figure~\ref{fig:ablation}, full FairGSE achieves the highest AUC on Credit while effectively reducing FPR. Moreover, contrastive learning improves training stability and generalization beyond what 2D-SE alone can provide. These results validate our theoretical hypothesis — that entropy-guided re-weighting, combined with contrastive learning, is key to breaking the `fairness–FPR' trade-off.

\subsection{Hyperparameters Analysis (RQ3)}

We explore the sensitivity of FairGSE with respect to two key hyperparameters: $\lambda_1$ and $\lambda_2$. We assess the impact of these hyperparameters on the FairGSE's performance and fairness by conducting experiments where $\lambda_1$ and $\lambda_2$ are varied across the set $\{0.1, 0.2, \cdots, 0.9, 1.0\}$ and $\{0.5,1,\cdots,4.5,5\}$, respectively. We use the imbalanced dataset Credit as a case study, the outcomes are depicted in Figure \ref{fig:parametersens}. The results indicate a clear trend: an increase in $\lambda_1$ generally leads to better fairness, which means FairGSE benefited from the learner view by contrastive loss. This is attributed to the maximum 2D-SE strategy, which enhances the fairness of GNN. Correspondingly, when $\lambda_1$ continuous growth, FairGSE is also affected by the anchor view, whic\begin{figure}[htbp]
\centerline{\includegraphics[scale=0.243]{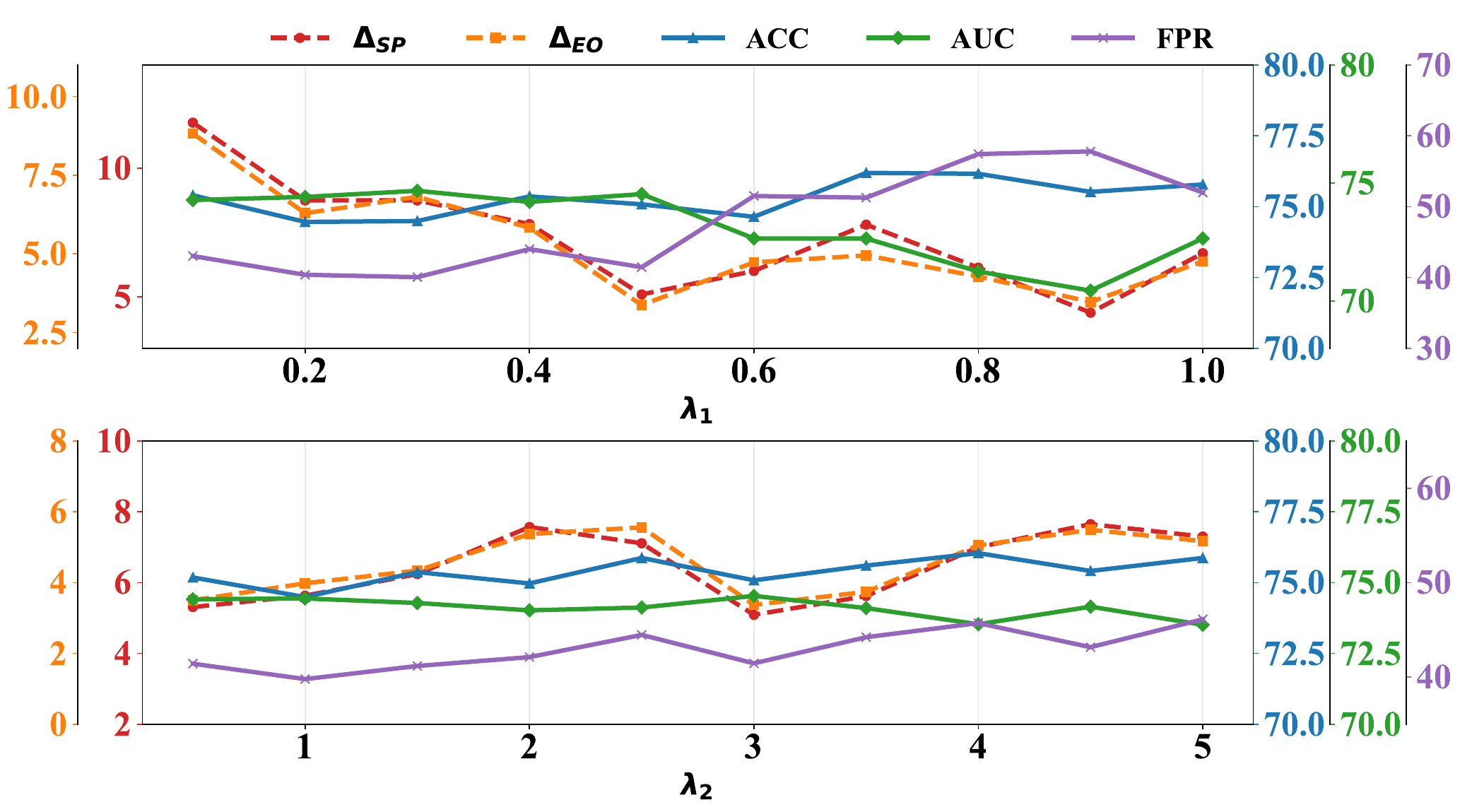}} 
\caption{The hyperparameters study results on the Credit.}
\label{fig:parametersens}
\end{figure}h causes high FPR. When $\lambda_1$ is 0.5, FairGSE achieves the best trade-off of fairness-FPR. The coefficient $\lambda_2$ controls the magnitude of the 2D-SE maximization. Since the objective is to minimise $-\lambda_2\mathcal{L}_{SE}$, if $\lambda_2$ is set too large, the loss term itself becomes dominated by its magnitude, preventing the optimizer from effectively maximizing the 2D-SE. As shown in Figure \ref{fig:parametersens}, the smaller $\lambda_2$ will have a better trade-off of `fairness-FPR'.

\section{Conclusion}
In this work, we investigate the problem of high false positive rates (FPR) in fairness-aware GNNs. To address this challenge, we introduce two-dimensional structural entropy (2D-SE) and theoretically formulate the trade-off between fairness and FPR as a 2D-SE maximization problem. Based on this insight, we propose a novel framework, FairGSE, which leverages contrastive learning to preserve critical structural information from the original graph while optimizing 2D-SE. Experiments demonstrated that FairGSE significantly outperforms existing methods in terms of both fairness metrics ($\Delta_{\textit{SP}}$ and $\Delta_{\textit{EO}}$) and prediction reliability (AUC and FPR). As future work, we aim to extend FairGSE to handle more complex tasks (e.g., multi-class or edge prediction) and the case of limited or unavailable sensitive attributes.

\section{Acknowledgments}
 This work was supported by the National Natural Science Foundation of China (Grant No. U22A2099 and Grant No.62336003).

\bibliography{main}

\clearpage
\appendix
\section{Notations}\label{ap:notations}
In this paper, we denote matrices with boldface uppercase letters (e.g., $\mathbf{H}$) and represent the $(i,j)$-th entry of $\mathbf{H}$ as $\mathbf{H}_{(i,j)}$. We denote sets with calligraphic fonts (e.g., $\mathcal{V}$) and the number of elements in the set $\mathcal{V}$ as $|\mathcal{V}|$. The frequently used notations are listed in Table \ref{tab:notations}.

\begin{table}[th]
\vspace{-3mm}
\centering
\caption{Frequently used notations.}
\vspace{-3mm}
\begin{adjustbox}{width=0.99\columnwidth,center}
\begin{tabular}{l|l}
\toprule
\textbf{Notation} & \textbf{Description} \\ \hline
$\mathcal{G} = (\mathcal{V}, \mathbf{A},\mathbf{X})$  &  The graph.\\ 
$n,d$  &  The number of nodes and the dimension of features.\\ 
$\mathbf{A} \in \mathbb{R}^{n \times n}$ &  The adjacency matrix.\\ 
$\mathbf{X} \in \mathbb{R}^{n \times d}$ &  The feature matrix.\\ 
$\mathcal{G}_l = (\mathcal{V},\mathbf{A}^l,\mathbf{X})$  &  The learner view.\\ 
$\mathbf{A}^l \in \mathbb{R}^{n \times n}$  &  The trainable adjacency matrix.\\ 
$\mathbf{H} \in \mathbb{R}^{n \times d_1}$  &  The node representaion matrix and $d_1$ is the dimension of $\mathbf{H}$.\\ 
$\mathcal{G}_a = (\mathcal{V},\mathbf{A}^a,\mathbf{X})$  & The anchor view.\\
$\mathbf{A}^a \in \mathbb{R}^{n \times n}$ &  The adjacency matrix of anchor view.\\ 
$\mathbf{H}^l,\mathbf{H}^a \in \mathbb{R}^{n \times d_1}$ & The node representation matrix of learner/anchor view. \\
$\mathbf{Z}^l,\mathbf{Z}^a \in \mathbb{R}^{n \times d_2}$ & The projected node representation matrix of learner/anchor view. \\
$d_1,d_2$ & The dimension of node representation/projection. \\
$\mathcal{L}_{cont}$ & The contrastive loss function. \\
$\mathcal{L}_{SE}$ & The two-dimensional structural entropy loss.
\\ \hline
$q_\omega(\cdot)$ & The graph structure learner with parameter $\omega$. \\
$f_\theta(\cdot)$ & The GNN-based encoder with parameter $\theta$. \\
$g_\varphi(\cdot)$ & The MLP-based projector with parameter $\varphi$. \\
$c_\phi(\cdot)$ & The MLP-based classifier with parameter $\phi$. \\
$vol(\cdot)$ & The volume.
\\
$\mathcal{P}_\mathcal{S}$ & The partition of $\mathcal{V}$ based on sensitive attribute $\mathcal{S}$.
\\ 
\bottomrule
\end{tabular}
\end{adjustbox}
\vspace{-5mm}
\label{tab:notations}
\end{table}

\section{Proof}
Throughout the proof procedure, we use the following notation:
\begin{itemize}
  \item \(G=(\mathcal{V},\mathbf{A}, \mathbf{X})\): learnable graph.
  \item \(\mathcal{S}=\{s_1,s_2,\cdots, s_n\}\): the set of sensitive attributes.
  \item \(\mathbf{H}\in\mathbb{R}^{n\times d}\): node embeddings produced by a GNN encoder $f_\theta$.
  \item \(y\in\{0,1\}^{n}\): binary task labels (\(y_i=1\) positive, \(y_i=0\) negative).
  \item \(\mathcal{P}_S=\{\mathcal{V}_{S_0},\mathcal{V}_{S_1}\}\): partition induced by \(\mathcal{S}\).
  \item \(H^{\mathcal{P}_S}(G)\): two-dimensional structural entropy under partition \(\mathcal{P}_\mathcal{S}\).
  \item \(H^{\max}\): $H^{\max}=\log_2|\mathcal{V}|-\sum_{\mathcal{V}_{S_i} \in \mathcal{P}_S}\frac{|\mathcal{V}_{S_i}|}{|\mathcal{V}|}\log_2\frac{|\mathcal{V}_{S_i}|}{|\mathcal{V}|}$ is the maximum achievable 2D-SE under the partition $\mathcal{P}_S$ when the graph is fully random and every node has uniform degree.
  \item $\Delta H\triangleq H^{\max}- H^{\mathcal{P}_S}(\mathbf{A})$: the tow-dimensional structural entropy gap.
\end{itemize}

Given a learnable graph $G=(\mathcal{V}, \mathbf{A}, \mathbf{X})$ with sensitive attribute partition $\mathcal{P}_S = \{\mathcal{V}_{S_0},\mathcal{V}_{S_1}\}$ and an trainable adjacency matrix $\mathbf{A}^l \in \mathbb{R}^{n \times n}$, where $\mathbf{A}^l_{(i,j)} \geq 0$ denotes the edge weight between node $v_i$ and $v_j$. We study group fairness for GNN node classification, where the goal is to learn a model $f: \mathbb{R}^{d} \to \{0, 1\}$ operating on node representations $\mathbf{H}$ such that the predictions satisfy lower $\Delta_{\textit{SP}}$ and $\Delta_{\textit{EO}}$ while minimizing FPR.

\subsection{Proof Lemma 1}
\begin{lemma}[Optimization Feasibility]
\label{ap:lem1}
Under two assumptions: (1) $\text{vol}(G) > 0$; (2) $\text{vol}(\mathcal{V}_{S_i}) > 0,\ \forall \mathcal{V}_{S_i} \in \mathcal{P}_S$. the 2D-SE $H^{\mathcal{P}_S}(G)$ is differentiable w.r.t.\ every edge $\mathbf{A}_{(i,j)}(>0)$, with:
\begin{align}
\frac{\partial H^{\mathcal{P}_S}(G)}{\partial \mathbf{A}_{(i,j)}} &= \frac{1}{\ln2} \sum_{i\in \{0, 1\}} (\alpha_{S_i} + \beta_{S_i}),
\end{align}
\end{lemma}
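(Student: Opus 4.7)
My plan is to verify differentiability via smoothness of the composition and then derive the gradient through a direct application of the chain rule, decomposing the 2D-SE into two pieces and carefully tracking how each piece depends on the single edge weight $\mathbf{A}_{(i,j)}$.

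First I would observe that under assumptions (1) and (2) every denominator appearing inside a logarithm in Definition~\ref{eq:2d-se} is strictly positive: $vol(G)>0$ and $vol(\mathcal{V}_{S_i})>0$ for both groups. Since $\mathbf{A}_{(i,j)}>0$ by hypothesis, the endpoint degrees $d_i,d_j$ are bounded away from zero in a neighbourhood of $\mathbf{A}_{(i,j)}$, so every argument of every $\log$ is smoothly parametrised. Thus $H^{\mathcal{P}_S}(G)$ is a smooth composition of logarithms, products, and affine maps, and the partial derivative exists.

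Next I would split $H^{\mathcal{P}_S}(G)=H_1+H_2$, where $H_1$ is the first sum (a weighted within-group node-degree entropy) and $H_2$ is the second sum (the cross-group boundary term). I would then tabulate the elementary partials of the building blocks with respect to $\mathbf{A}_{(i,j)}$: $\partial d_v/\partial\mathbf{A}_{(i,j)}$ is nonzero only for $v\in\{i,j\}$; $\partial vol(\mathcal{V}_{S_k})/\partial\mathbf{A}_{(i,j)}$ depends on which sensitive groups contain $i$ and $j$; $\partial vol(G)/\partial\mathbf{A}_{(i,j)}$ is a constant; and $\partial g(\mathcal{V}_{S_k})/\partial\mathbf{A}_{(i,j)}$ vanishes unless $(i,j)$ is an inter-group edge. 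From here, a straightforward chain rule using $\tfrac{d}{dx}\log_2 x=\tfrac{1}{x\ln 2}$ gives the derivative, with the factor $1/\ln 2$ factoring out uniformly because every differentiated logarithm produces it.

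Finally I would collect like terms by sensitive group. All contributions to $\partial H_1/\partial \mathbf{A}_{(i,j)}$ that are expressed as sums over $v\in\mathcal{V}_{S_k}$ (i.e.\ functions of the degrees $\{d_v : v\in\mathcal{V}_{S_k}\}$, $vol(\mathcal{V}_{S_k})$, and $vol(G)$) I would bundle into $\alpha_{S_k}$, while all contributions to $\partial H_2/\partial \mathbf{A}_{(i,j)}$ (functions of $g(\mathcal{V}_{S_k})$, $vol(\mathcal{V}_{S_k})$, and $vol(G)$) I would bundle into $\beta_{S_k}$. Summing over $k\in\{0,1\}$ and pulling the $1/\ln 2$ outside yields exactly the claimed form. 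The main obstacle is purely the bookkeeping: the edge $(i,j)$ enters $H^{\mathcal{P}_S}(G)$ through four different channels (endpoint degrees, the two group volumes, the total volume, and possibly the cut $g$), and each appears multiplicatively inside a log, so the chain rule produces many terms. Handling the intra-group ($s_i=s_j$) and inter-group ($s_i\neq s_j$) cases in a single unified formula—without explicit case-splitting in the statement—is what forces one to keep the coefficients in the abstract form $\alpha_{S_k}+\beta_{S_k}$ rather than writing them out. After this consolidation the result follows by algebraic simplification with no further new idea required.
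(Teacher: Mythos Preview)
Your proposal is correct and follows essentially the same route as the paper: split $H^{\mathcal{P}_S}(G)$ into the intra-group entropy term and the boundary term, compute the elementary partials of $d_v$, $vol(\mathcal{V}_{S_k})$, $vol(G)$, and $g(\mathcal{V}_{S_k})$, apply the chain rule, and collect the contributions from the first and second sums into $\alpha_{S_k}$ and $\beta_{S_k}$ respectively with the $1/\ln 2$ factored out. Your differentiability argument is in fact slightly more careful than the paper's, which jumps directly into the derivative computation without separately justifying smoothness.
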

We derive the gradient of the 2D-SE:
\begin{equation}
    \begin{aligned}
        H^{\mathcal{P}_S}(G) &= -\sum_{\mathcal{V}_{S_i}\in \mathcal{P}_S} \frac{vol(\mathcal{V}_{S_i})}{vol(G)} \sum_{v \in \mathcal{V}_{s}} \frac{d_v}{vol(\mathcal{V}_{S_i})} \log_2 \frac{d_v}{vol(\mathcal{V}_{S_i})} \\
        &- \sum_{\mathcal{V}_{S_i}\in \mathcal{P}_S} \frac{g(\mathcal{V}_{S_i})}{vol(G)} \log_2 \frac{vol(\mathcal{V}_{S_i})}{vol(G)}\\
        &=-\frac{1}{\ln 2}\sum_{\mathcal{V}_{S_i}\in \mathcal{P}_S}\Bigl[
\frac{vol(\mathcal{V}_{S_i})}{vol(G)}H_{\mathcal{V}_{S_i}}^{\text{intra}}
+\frac{g(\mathcal{V}_{S_i})}{vol(G)}\ln\frac{vol(\mathcal{V}_{S_i})}{vol(G)}
\Bigr]
    \end{aligned}
\end{equation}
with respect to any edge weight \(a_{(i,j)}\).
Below, we treat the two terms $\alpha_{S_i}$ and $\beta_{S_i}$ separately.

Let
\[
H_{\mathcal{V}_{S_i}}^{\text{intra}}=-\sum_{v_k\in\mathcal{V}_{S_i}}\frac{d_k}{vol(\mathcal{V}_{S_i})}\ln\frac{d_k}{vol(\mathcal{V}_{S_i})}.
\]
Then $\alpha_{S_i}$ is:
\begin{equation}
    \begin{aligned}
    \frac{\partial}{\partial a_{(i,j)}}\Bigl(\frac{vol(\mathcal{V}_{S_i})}{vol(G)}H_{\mathcal{V}_{S_i}}^{\text{intra}}\Bigr)
    &=\frac{\partial vol(\mathcal{V}_{S_i})}{\partial a_{(i,j)}}\frac{H_{\mathcal{V}_{S_i}}^{\text{intra}}}{vol(G)} \\
    &+\frac{vol(\mathcal{V}_{S_i})}{vol(G)}\frac{\partial H_{\mathcal{V}_{S_i}}^{\text{intra}}}{\partial a_{(i,j)}}\\
    &-\frac{vol(\mathcal{V}_{S_i})}{vol(G)^2}H_{\mathcal{V}_{S_i}}^{\text{intra}}\frac{\partial vol(G)}{\partial a_{(i,j)}},
    \end{aligned}
\end{equation}

where
\[
\frac{\partial vol(\mathcal{V}_{S_i})}{\partial a_{(i,j)}}=\mathbf{1}_{i\in \mathcal{V}_{S_i}}+\mathbf{1}_{j\in \mathcal{V}_{S_i}},\
\frac{\partial vol(G)}{\partial a_{(i,j)}}=2,
\]
and
\begin{equation}
    \begin{aligned}
    \frac{\partial H_{\mathcal{V}_{S_i}}^{\text{intra}}}{\partial a_{(i,j)}}
    &=\sum_{v_k\in {\mathcal{V}_{S_i}}}\Bigl[
    -\frac{(\mathbf{1}_{v_i\in S_i}+\mathbf{1}_{v_j\in S_i})}{vol(\mathcal{V}_{S_i})}\Bigl(1+\ln\frac{d_k}{vol(\mathcal{V}_{S_i})}\Bigr)\\
    &+\frac{d_k}{vol(\mathcal{V}_{S_i})^2}\Bigl(1+\ln\frac{d_k}{vol(\mathcal{V}_{S_i})}\Bigr)
    (\mathbf{1}_{v_i\in \mathcal{V}_{S_i}}+\mathbf{1}_{v_j\in \mathcal{V}_{S_i}})
    \Bigr].
    \end{aligned}
\end{equation}

For $\beta_{S_i}$:
\begin{equation}
    \begin{aligned}
    \frac{\partial}{\partial a_{(i,j)}}\Bigl(\frac{g(\mathcal{V}_{S_i})}{vol(G)}\ln\frac{vol(\mathcal{V}_{S_i})}{vol(G)}\Bigr)
    =\frac{\partial g(\mathcal{V}_{S_i})}{\partial a_{(i,j)}}\frac{1}{vol(G)}\ln\frac{vol(\mathcal{V}_{S_i})}{vol(G)}\\
    +\frac{g(\mathcal{V}_{S_i})}{vol(G)}\Bigl(\frac{1}{vol(\mathcal{V}_{S_i})}\frac{\partial vol(\mathcal{V}_{S_i})}{\partial a_{(i,j)}}
    -\frac{1}{vol(G)}\frac{\partial vol(G)}{\partial a_{(i,j)}}\Bigr),
    \end{aligned}
\end{equation}
with
\[
\frac{\partial g(\mathcal{V}_{S_i})}{\partial a_{(i,j)}}
=\mathbf{1}_{(v_i\in \mathcal{V}_{S_i} \land v_j\notin \mathcal{V}_{S_i})}+\mathbf{1}_{(v_i\notin \mathcal{V}_{S_i}\land v_j\in \mathcal{V}_{S_i})}.
\]

Combining the two contributions yields:
\[
\frac{\partial H^{\mathcal{P}_S}(G)}{\partial a_{(i,j)}}
=\frac{1}{\ln 2}\sum_{\mathcal{V}_{S_i}\in \mathcal{P}_S}[
\alpha_{S_i}
+\beta_{S_i}
].
\]

\subsection{Proof Theorem}
\begin{theorem}[Fairness via 2D-SE bound]\label{thm:fairness_entropy}
Maximizing the two-dimensional structural entropy of the sensitive-attribute partition tightens an information-theoretic upper bound on both Statistical Parity (SP) and Equalized Odds (EO) violations.  Formally,
\begin{align}
\Delta_{\text{SP}} &\le \sqrt{2\,\Delta\mathcal{H}},\\[2pt]
\Delta_{\text{EO}} &\le \sqrt{2\,\Delta\mathcal{H}}.
\end{align}
\end{theorem}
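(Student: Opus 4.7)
The plan is to convert both fairness gaps into total-variation distances between conditional embedding distributions, bound those distances by a mutual information via a Pinsker-type inequality, and finally relate the mutual information to the 2D-SE gap through a data-processing argument built on the random-walk reading of $H^{\mathcal{P}_S}(G)$.

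First I would reinterpret the two terms in Definition~\ref{eq:2d-se} as the expected code length of a single step of the natural random walk on the reweighted graph: draw an edge proportional to its weight, then encode its endpoint by first announcing the sensitive group and then the identity inside that group. The intra-group degree entropy and the cut entropy of Definition~\ref{eq:2d-se} correspond exactly to these two encoding stages. Under this reading, $H^{\max}$ is the code length achieved when the destination is uniform over nodes and the group label is maximally unpredictable, so the gap $H^{\max}-H^{\mathcal{P}_S}(G)$ upper bounds the mutual information between the walk destination and its sensitive label. I would then model the encoder probabilistically by sampling a node $v$ uniformly and reading $(s_v,\mathbf{H}_v)$; because $f_\theta$ is a deterministic function of message passing on $G$, the chain $s_v \to \text{neighborhood of } v \to \mathbf{H}_v$ is Markov, and the data-processing inequality together with the one-step walk bound yields
\[
I(\mathbf{H};\mathcal{S}) \;\le\; H^{\max} - H^{\mathcal{P}_S}(G).
\]

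Given this mutual-information bound, the fairness side is standard. I would note that $\Delta_{\textit{SP}}$ is the gap on the single event $\{\hat y=1\}$ under the two conditional laws $P_{\hat y\mid s=0}$ and $P_{\hat y\mid s=1}$, hence at most $\|P_{\hat y\mid s=0}-P_{\hat y\mid s=1}\|_{\mathrm{TV}}$; since $\hat y = c_\phi(\mathbf{H})$ is measurable post-processing of $\mathbf{H}$, this in turn is dominated by $\|P_{\mathbf{H}\mid s=0}-P_{\mathbf{H}\mid s=1}\|_{\mathrm{TV}}$. A Pinsker-type bound applied through its Jensen--Shannon form then gives
\[
\bigl\|P_{\mathbf{H}\mid s=0}-P_{\mathbf{H}\mid s=1}\bigr\|_{\mathrm{TV}} \;\le\; \sqrt{2\,I(\mathbf{H};\mathcal{S})},
\]
and chaining with the 2D-SE bound produces the desired inequality for $\Delta_{\textit{SP}}$. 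The proof for $\Delta_{\textit{EO}}$ is identical after conditioning every distribution additionally on $y=1$, since both the post-processing and Pinsker steps apply verbatim to the class-conditional laws.

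The step I expect to be the main obstacle is the data-processing bound $I(\mathbf{H};\mathcal{S}) \le H^{\max} - H^{\mathcal{P}_S}(G)$: the single-step random-walk reading cleanly accounts for only one round of aggregation, whereas $f_\theta$ is typically multi-layer and could in principle extract more information from deeper neighborhoods. I would handle this by observing that every layer is still a deterministic function of $G$ with no new randomness injected, so the entire computation remains inside the Markov chain $\mathcal{S}\to G\to \mathbf{H}$ and the one-step bound lifts to the full embedding $\mathbf{H}$ via the tower property of mutual information.
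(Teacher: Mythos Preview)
Your proposal is correct and follows essentially the same three-step route as the paper: rewrite $H^{\mathcal{P}_S}(G)$ via the one-step random-walk encoding to identify $\Delta\mathcal{H}$ with $I(\mathcal{U};S)$, apply data processing along $S\to\mathcal{U}\to\mathbf{H}$ to obtain $I(\mathbf{H};\mathcal{S})\le\Delta\mathcal{H}$, and finish with the Jensen--Shannon Pinsker bound (the paper states the JS step directly, whereas you pass explicitly through total variation, but this is only expository). Your discussion of the multi-layer obstacle is in fact more careful than the paper's, which simply asserts the Markov chain $S\to\mathcal{U}\to\mathbf{H}$ without comment.
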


\begin{proof}
\textbf{Step 1: Entropy–Mutual-Information Identity.}\\
Let $\mathcal{U} \sim \pi$ denote the node reached by a stationary random walk on $\mathbf{A}$, and let $S$ be the sensitive attribute. Under the two-dimensional structural entropy model, we have:
\begin{align}
H^{\mathcal{P}_S}(G) &= H(\mathcal{U}, S) \\
&= H(S) + H(\mathcal{U} \mid S).
\end{align}

The maximum achievable 2D-SE under partition $\mathcal{P}_S$, denoted $H^{\max}$, occurs when the graph structure is fully random and all nodes have equal degree (i.e., $\pi(v) = 1/|\mathcal{V}|$ for all $v$). In this case, the conditional distribution $P(\mathcal{U} \mid S)$ becomes uniform within each sensitive group, and we obtain:
\begin{align}
H^{\max} &= H(S) + H(\pi) \\
&= H(S) + \log_2 |\mathcal{V}|.
\end{align}

However, note that this expression assumes uniform population distribution across sensitive groups. In general, when group sizes differ, the correct form of $H^{\max}$ accounts for the group proportions:
\begin{equation}
H^{\max} = \log_2 |\mathcal{V}| - \sum_{\mathcal{V}_{S_i} \in \mathcal{P}_S} \frac{|\mathcal{V}_{S_i}|}{|\mathcal{V}|} \log_2 \frac{|\mathcal{V}_{S_i}|}{|\mathcal{V}|}.
\end{equation}

This can also be written as:
\begin{equation}
H^{\max} = H(\pi) + H(S),
\end{equation}
since $H(\pi) = \log_2 |\mathcal{V}|$ under uniform degree, and $H(S) = -\sum_i p_i \log_2 p_i$ with $p_i = |\mathcal{V}_{S_i}| / |\mathcal{V}|$.

Therefore, the entropy gap is:
\begin{equation}
\begin{aligned}
    H^{\max} - H^{\mathcal{P}_S}(G) &= [H(S) + H(\pi)] - [H(S) + H(\mathcal{U} \mid S)] \\
    &= H(\pi) - H(\mathcal{U} \mid S).
\end{aligned}
\end{equation}

But since $H(\mathcal{U}, S) = H(S) + H(\mathcal{U} \mid S)$ and $H(\mathcal{U}) = H(\pi)$, we have:
\begin{equation}
I(\mathcal{U}; S) = H(\mathcal{U}) - H(\mathcal{U} \mid S) = H(\pi) - H(\mathcal{U} \mid S).
\end{equation}

Thus,
\begin{equation}
H^{\max} - H^{\mathcal{P}_S}(G) = I(\mathcal{U}; S) = \Delta H.
\end{equation}

\textbf{Step 2: Data-Processing Inequality.}\\
The Markov chain $S\to \mathcal{U}\to \mathbf{H}$ gives:
\begin{equation}
    I(\mathbf{H};S)\le I(\mathcal{U};S)=\Delta H.
\end{equation}

\textbf{Step 3: Jensen-Shannon Pinsker Bound.}\\
Let $f:\mathbb{R}^d\to\{0,1\}$ be any classifier $c_\phi$ on $\mathbf{H}$. Using the JS-divergence form of Pinsker’s inequality,
\begin{equation}
    \begin{aligned}
        \Delta_{\text{SP}}&=\bigl|P(f=1\mid S=0)-P(f=1\mid S=1)\bigr| \\
&\le\sqrt{2\,\mathrm{JS}(P_{\mathbf{H}|S=0}\|P_{\mathbf{H}|S=1})}\\
&\le\sqrt{2I(\mathbf{H};S)}
\le\sqrt{2\Delta H}.
    \end{aligned}
\end{equation}
The EO bound follows identically by conditioning on the label $y$.
\end{proof}

\subsection{Proof Theorem 2}\label{thm:2}


\begin{theorem}[FPR via 2D-SE bound]\label{thm:global_fpr}
Let \(r=\frac{|\{v\mid y_v=0\}|}{|\mathcal{V}|}\) be the proportion of negative label. For any classifier $c_\phi$ trained on \(\mathbf{H}\) and with entropy gap \(\Delta\mathcal{H}>0\),
\begin{equation}
    \mathrm{FPR}(f_{\theta})\le\frac{r}{1+r}+\sqrt{2\Delta\mathcal{H}}.
\end{equation}
\end{theorem}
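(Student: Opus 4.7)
The plan is to follow the three-step sketch stated after the theorem, reusing as much of the machinery from Theorem 1 as possible. I would start by decomposing the false positive rate as a weighted mixture over sensitive groups conditioned on the negative label: writing $\mathrm{FPR}_i = P(\hat{y}_v = 1 \mid y_v = 0,\, s_v = i)$, the law of total probability gives
\begin{equation*}
    \mathrm{FPR} = P(s_v=0 \mid y_v=0)\,\mathrm{FPR}_0 + P(s_v=1 \mid y_v=0)\,\mathrm{FPR}_1.
\end{equation*}
This reduces the problem to controlling two things: the disparity $|\mathrm{FPR}_0 - \mathrm{FPR}_1|$ between groups, and the overall magnitude that is forced once the disparity is fixed.

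Next I would recycle the information-theoretic pipeline from the proof of Theorem 1 to bound the disparity. Along the Markov chain $\mathcal{S} \to \mathcal{U} \to \mathbf{H}$ (with $\mathcal{U}$ the stationary random walk on the graph), the data-processing inequality yields $I(\mathbf{H}; \mathcal{S}) \le I(\mathcal{U}; \mathcal{S}) = \Delta H$, where $\Delta H = H^{\max} - H^{\mathcal{P}_S}(G)$. Restricting the distribution to the sub-population $\{v : y_v = 0\}$ and applying the Jensen--Shannon form of Pinsker's inequality to the total-variation gap between $P_{\hat{y} \mid y_v = 0,\, s_v = 0}$ and $P_{\hat{y} \mid y_v = 0,\, s_v = 1}$, I obtain
\begin{equation*}
    |\mathrm{FPR}_0 - \mathrm{FPR}_1| \;\le\; \sqrt{2\,\Delta H}.
\end{equation*}
This is essentially the $\Delta_{\mathrm{SP}}$ argument of Theorem 1 with the conditioning event replaced by ``negative label''.

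The last step is a worst-case assembly: maximise $\mathrm{FPR}$ over all joint laws of $(\mathcal{S}, y, \hat{y})$ whose group disparity is at most $\sqrt{2\,\Delta H}$ and whose marginal negative-label proportion is $r$. Expressing the mixture weights $P(s_v = i \mid y_v = 0)$ via Bayes's rule in terms of $r$ and the group sizes, and pushing the classifier to the extremal configuration in which nearly all negative-label nodes of one sensitive group are mispredicted as positive while the other group is left untouched, the baseline contribution from label imbalance collapses to $r/(1+r)$ while the between-group disparity supplies the additive $\sqrt{2\,\Delta H}$. Concatenating the two pieces gives the claimed bound.

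The main obstacle I anticipate is exactly this last optimisation. A direct Lagrangian or KKT treatment has to identify the extremal joint law of $(\mathcal{S}, y, \hat{y})$ that simultaneously saturates the disparity constraint $|\mathrm{FPR}_0 - \mathrm{FPR}_1| = \sqrt{2\,\Delta H}$, respects the marginal $P(y_v = 0) = r$, and remains realisable under the information constraint $I(\mathbf{H}; \mathcal{S}) \le \Delta H$. Handling boundary cases in which some $\mathrm{FPR}_i$ saturates at $0$ or $1$, and confirming that the extremal value is genuinely $r/(1+r)$ rather than a looser quantity such as $r$ or $1/2$, is where the real work will sit; the information-theoretic ingredients, by contrast, are almost immediate given the results already proved for Theorem 1.
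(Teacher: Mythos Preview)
Your proposal is correct and mirrors the paper's proof almost exactly: the same decomposition of $\mathrm{FPR}$ as a mixture over sensitive groups conditioned on $y=0$, the same JS--Pinsker bound on $|\mathrm{FPR}_0-\mathrm{FPR}_1|$ via the data-processing inequality recycled from Theorem~1, and the same worst-case assembly in Step~3. The only divergence is that the paper's Step~3 is lighter than you anticipate---it bounds $\mathrm{FPR}\le\max(p_0,p_1)\le\tfrac{p_0+p_1}{2}+\tfrac{|p_0-p_1|}{2}$ and then simply \emph{asserts} $\tfrac{p_0+p_1}{2}\le\tfrac{r}{1+r}$ as a ``worst-case assumption,'' so no Lagrangian or KKT machinery is actually developed (or, for that matter, justified).
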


\begin{proof}
\textbf{Step 1: FPR Identity.}\\
\begin{equation}
    \begin{aligned}
        \mathrm{FPR}(f_{\theta})=\Pr_{v\sim \mathcal{V}^-}\!\bigl[f(\mathbf{H}_v)=1\bigr]\\
=\sum_{s\in\{0,1\}}\Pr(S=s\mid Y=0)\cdot\Pr(f_{\theta}=1\mid S=s,Y=0).
    \end{aligned}
\end{equation}
where $\mathcal{V}^-$ is the set of nodes with negative label in graph $G$.

\textbf{Step 2: Pinsker Bound via JS Divergence.}\\
Let $E = \{f = 1\}$ be the event that the GNN model $f$ predicts a node as positive outcome. Then,
\begin{equation}
    \begin{aligned}
        \bigl|\Pr(E\mid S=0)-\Pr(E\mid S=1)\bigr| &\le\sqrt{2\,\mathrm{JS}(P_{\mathbf{H}|S=0}\|P_{\mathbf{H}|S=1})}\\
        &\le\sqrt{2I(\mathbf{H};S)}\le\sqrt{2\Delta H}.
    \end{aligned}
\end{equation}

\textbf{Step 3: Combine Bounds.}\\
Define $p_s = \Pr(E \mid S = s, Y = 0)$ for $s \in \{0,1\}$. Then,
\begin{equation}
    \mathrm{FPR}(f) = \sum_{s \in \{0,1\}} \Pr(S = s \mid Y = 0) \cdot p_s \le \max(p_0, p_1).
\end{equation}
Using the identity
\begin{equation}
    \max(p_0, p_1) \le \frac{p_0 + p_1}{2} + \frac{|p_0 - p_1|}{2},
\end{equation}
and applying the JS-Pinsker bound from Step 2 yields
\begin{equation}
    \mathrm{FPR}(f) \le \frac{p_0 + p_1}{2} + \frac{\sqrt{2\Delta H}}{2}.
\end{equation}
Now, under the worst-case assumption on $p_0$ and $p_1$, we have $\frac{p_0 + p_1}{2} \le \frac{r}{1 + r}$, where $r = \Pr(Y = 0)$ is the global negative-class proportion. This follows from the fact that $\Pr(E \mid Y = 0) \le 1$, and the class imbalance implies an upper bound on the average prediction rate.

Therefore, we conclude:
\begin{equation}
    \mathrm{FPR}(f) \le \frac{r}{1 + r} + \sqrt{2\Delta H}.
\end{equation}
\end{proof}

\section{Experiments}
\subsection{Datasets}
Three real-world fairness datasets, namely Credit, Pokec\_n and Pokec\_z, are employed in our experiments. We give a brief overview of these datasets as follows:
\begin{itemize}
    \item Credit \cite{yeh2009comparisons} is a credit card users dataset. Nodes are credit card users and they are connected based on the pattern similarity of their purchases and payments. Considering "age" as the sensitive attribute, the task is to predict whether a user will default on credit card payment.

    \item Pokec\_n/z \cite{FairGNN} are collected from a popular social network in Slovakia, where Pokec\_n and Pokec\_z are social network data in two different provinces. Nodes denote users and edge represents the friendship between users. Considering “region” as the sensitive attribute, the task is to predict the working field of the users.
    
\end{itemize}

\subsection{Baselines}
We compare FairGSE with seven SOTA fairness-aware GNNs, including FairGNN, EDITS, FairVGNN, FairSIN, FairINV, DAB-GNN and FairGP. A brief overview of these methods is shown as follows:
\begin{itemize}
    \item FairGNN \cite{FairGNN} aims to learn fair GNNs with sensitive attribute information by a sensitive attribute estimator and adversarial learning.

    \item EDITS \cite{dong2022edits} minimize the Wasserstein Distance of node features and graph topology between groups. 


    \item FairVGNN \cite{wang2022improving} preventing sensitive attribute leakage using adversarial learning and clamping weights.

    \item FairSIN \cite{yang2024fairsin} neutralizes sensitive information using a heterogeneous neighbor mean representation estimator.

    \item FairINV \cite{zhu2024one} is a method which combines unsupervised inference of sensitive attributes with invariant learning to eliminate spurious correlations between sensitive attributes and labels.

    \item DAB-GNN \cite{lee2025disentangling} disentangles attribute, structure, and potential biases via dedicated disentanglers, amplifies each bias to preserve its characteristics, and then debiases through distribution alignment across subgroups by utilizing the Wasserstein distance.

    \item FairGP \cite{luo2025fairgp} mitigates sensitive-feature bias in global attention by graph partitioning and eliminating inter-cluster attention, thereby achieving scalable and fair Graph Transformers.
\end{itemize}





\end{document}